\DeclareMathOperator{\diag}{diag}
\newtheorem{theorem}{Theorem}
\newtheorem{lemma}{Lemma}
\newtheorem{assumption}{Assumption}
\theoremstyle{definition}
\newtheorem{definition}{Definition}
\theoremstyle{remark}
\newtheorem{remark}{Remark}
\newcommand{\bmat}{\left[ \begin{matrix}}
	\newcommand{\emat}{\end{matrix} \right]}
\newcommand{\innerprod}[2]{\langle{#1},\,{#2}\rangle}
\DeclareMathOperator{\argmin}{argmin}
\DeclareMathOperator{\prox}{prox}
\newcommand{\Rbb}{\mathbb R}
\newcommand{\Hbb}{\mathbb H}
\newcommand{\Nbb}{\mathbb N}
\newcommand{\Fbb}{\mathbb F}
\newcommand{\xb}{\mathbf  x}
\newcommand{\yb}{\mathbf  y}
\newcommand{\sbf}{\mathbf  s}  %\sb already defined
\newcommand{\zb}{\mathbf  z}
\newcommand{\wb}{\mathbf  w}
\newcommand{\vb}{\mathbf  v}
\newcommand{\fb}{\mathbf  f}
\newcommand{\db}{\mathbf  d}
\newcommand{\ub}{\mathbf  u}
\newcommand{\rb}{\mathbf  r}
\newcommand{\oneb}{\mathbf 1}
\newcommand{\zerob}{\mathbf 0}
\newcommand{\rhob}{\boldsymbol{\rho}}
\newcommand{\thetab}{\boldsymbol{\theta}}
\newcommand{\lambdab}{\boldsymbol{\lambda}}
\newcommand{\tol}{\mathtt{tol}}
\newcommand{\Lcal}{\mathcal{L}}
\newcommand{\Ical}{\mathcal{I}}
\newcommand{\Kcal}{\mathcal{K}}
\newcommand{\Xcal}{\mathcal{X}}
\newcommand{\test}{\text{test}}
\DeclareMathOperator{\sign}{sign}
\newcommand{\stepfunc}{\|(\cdot)_+\|_0}
\newcommand{\LzeroneSVM}{$L_{0/1}$-SVM}
\title{\LARGE \bf
An ADMM Solver for the MKL-$L_{0/1}$-SVM
}
\author{Yijie Shi and Bin Zhu% <-this % stops a space
\thanks{This work was supported in part by Shenzhen Science and Technology Program (Grant No.~202206193000001-20220817184157001), the Fundamental Research Funds for the Central Universities,  and the ``Hundred-Talent Program'' of Sun Yat-sen University.}% <-this % stops a space
\thanks{The authors are with School of Intelligent Systems Engineering, Sun Yat-sen University, Gongchang Road 66, 518107 Shenzhen, China. Emails: {\tt\small shiyj27@mail2.sysu.edu.cn} (Y. Shi), {\tt\small zhub26@mail.sysu.edu.cn} (B. Zhu).}%%
}
\begin{document}

\maketitle
\thispagestyle{empty}
\pagestyle{empty}

%%%%%%%%%%%%%%%%%%%%%%%%%%%%%%%%%%%%%%%%%%%%%%%%%%%%%%%%%%%%%%%%%%%%%%%%%%%%%%%%
\begin{abstract}
We formulate the Multiple Kernel Learning (abbreviated as MKL) problem for the support vector machine with the infamous $(0,1)$-loss function. Some first-order optimality conditions are given and then exploited to develop a fast ADMM solver for the nonconvex and nonsmooth optimization problem. A simple numerical experiment on synthetic planar data shows that our MKL-\LzeroneSVM\ framework could be promising.
\end{abstract}

%\keyword{ Kernel SVM, $(0,1)$-loss function, nonsmooth nonconvex optimization, multiple kernel learning, ADMM}}

%\begin{keyword}                           % Five to ten keywords,  
%	Kernel SVM, $(0,1)$-loss function, nonsmooth nonconvex optimization, multiple kernel learning, ADMM.         % chosen from the IFAC 
%\end{keyword} 
%\begin{keyword}                           % Five to ten keywords,  
%	Kernel SVM, $(0,1)$-loss function, nonsmooth nonconvex optimization, multiple kernel learning, ADMM.         % chosen from the IFAC 
%\end{keyword}

%%%%%%%%%%%%%%%%%%%%%%%%%%%%%%%%%%%%%%%%%%%%%%%%%%%%%%%%%%%%%%%%%%%%%%%%%%%%%%%%

\section{Introduction}\label{sec:intro}

The support vector machine (SVM) is a classic tool in machine learning \cite{theodoridis2020machine}.
The idea dates back to the famous work of Cortes and Vapnik \cite{cortes1995support}. On p.~281 of that paper, the authors suggested (implicitly) the $(0,1)$-loss function, also called $L_{0/1}$ loss in \cite{wang2021support}, for quantifying the error of classification which essentially counts the number of samples to which the classifier assigns wrong labels. However, they also pointed out that the resulting optimization problem with the $(0,1)$ loss is \emph{NP-complete, nonsmooth, and nonconvex}, which directed researchers to the path of designing other (easier) loss functions, notably convex ones like the \emph{hinge} loss.
Recently in the literature, there is a resurging interest in the original SVM problem with the $(0,1)$ loss, abbreviated as ``$L_{0/1}$-SVM'', following theoretical and algorithmic developments for optimization problems with the ``\emph{$\ell_0$-norm}'', see e.g., \cite{nikolova2013description} and the references therein. In particular, \cite{wang2021support} proposed KKT-like optimality conditions for the $L_{0/1}$-SVM optimization problem and a efficient ADMM solver to obtain an \emph{approximate} solution.

In this work, we draw inspiration from the aforementioned papers and present a \emph{kernelized} version of the theory in which the ambient functional space has a richer structure than the usual Euclidean space. More precisely, we shall formulate the $L_{0/1}$-SVM problem in the context of \emph{Multiple Kernel Learning} \cite{rakotomamonjy2008simplemkl} and describe a first-order optimality theory as well as a numerical procedure for the optimization problem via the ADMM. Obviously, the MKL framework can offer much more flexibility than the single-kernel formulation by letting the optimization algorithm determine the best combination of different kernel functions. In this sense, our results represent a substantial generalization of the work in \cite{wang2021support} while maintaining the core features of \LzeroneSVM.

%The contribution of this paper is to propose an efficient algorithm named ADMM for the MKL-$L_{0/1}$-SVM to solve the MKL-$L_{0/1}$-SVM problem. Our algorithm is simple and essentially based on ADMM on the SVM. We iteratively determine the combination of kernels by ADMM and take advantage of the proximal operator to solve the $\ell_0$-norm.

The remainder of this paper is organized as follows. Section~\ref{sec:prob} reviews the classic \LzeroneSVM\ in the single-kernel case, while Section~\ref{sec:MKL} discusses the MKL framework. Section~\ref{sec:optimality} establishes the optimality theory for the MKL-\LzeroneSVM\ problem,
%including the existence of a globally optimal solution to problem \eqref{opt_MKL_01} and the relationships between a P-stationary point and an optimal solution. 
and in Section~\ref{sec:algorithm} we propose an ADMM algorithm to solve the optimization problem. Finally, numerical experiments and concluding remarks are provided in Sections~\ref{sec:sims} and \ref{sec:conclu}, respectively.

\subsection*{Notation}

$\Rbb_+$ denotes the set of nonnegative reals, and $\Rbb_+^n :=\Rbb_+\times\cdots\times \Rbb_+$ the $n$-fold Cartesian product. $\Nbb_m:=\{1, 2, \dots, m\}$ is a finite index set for the data points and $\Nbb_L$ for the kernels.
Throughout the paper, the summation variables $i\in\Nbb_m$ is reserved for the data index, and $\ell\in\Nbb_L$ for the kernel index. We write $\sum_{i}$ and $\sum_{\ell}$ in place of $\sum_{i=1}^m$ and $\sum_{\ell=1}^L$ to simplify the notation.

\section{Problem formulation: the single-kernel case}\label{sec:prob}

Given the data set $\{(\xb_i,y_i) : i\in\Nbb_m\}$ where $\xb_i\in\Rbb^n$ and $y_i\in\{-1, 1\}$ the label, the binary classification task aims to predict the correct label $y$ for each vector $\xb$, seen or unseen. To this end, the SVM first lifts the problem to a \emph{reproducing kernel Hilbert space}\footnote{The theory of RKHS goes back to \cite{aronszajn1950theory} and many more, see e.g., \cite{paulsen2016introduction}. It has been used in the SVM as early as \cite{cortes1995support}.} (RKHS) $\Hbb$, in general infinite-dimensional and equipped with a \emph{positive definite} kernel function, say $\kappa : \Rbb^n \times \Rbb^n \to \Rbb$, via the feature mapping:
\begin{equation}
\xb \mapsto \phi(\xb) := \kappa(\cdot,\xb) \in \Hbb,
\end{equation}
and then considers discriminant (or decision) functions of the form
\begin{equation}\label{discrimi_func}
\tilde{f}(\xb) = b + \innerprod{w}{\phi(\xb)}_{\Hbb} = b+w(\xb),
\end{equation}
where $b\in\Rbb$, $w\in\Hbb$, $\innerprod{\cdot}{\cdot}_{\Hbb}$ the inner product associated to the RKHS $\Hbb$, and the second equality is due to the so-called \emph{reproducing property}. 
Note that such a discriminant function is in general nonlinear in $\xb$, but is indeed linear with respect to $\phi(\xb)$ in the feature space $\Hbb$.
The label of $\xb$ is assigned via $y(\xb)=\sign [\tilde{f}(\xb)]$ where $\sign(\cdot)$ is the sign function which gives $+1$ for a positive number, $-1$ for a negative number, and left undefined at zero.

%\bin{[SIMPLIFY the exposition!]}

In order to estimate the unknown quantities $b$ and $w$ in \eqref{discrimi_func}, one sets up the unconstrained optimization problem:
\begin{equation}\label{optim_infinit_dim}
\min_{\substack{w\in\Hbb, \, b\in\Rbb, \\ \tilde f(\cdot) = w(\cdot) + b}}\quad \frac{1}{2}\|w\|_{\Hbb}^2 + C \sum_{i} \Lcal(y_i, \tilde f(\xb_i)),% \quad \text{s.t.} \ \eqref{discrimi_func},
\end{equation}
where $\|w\|_{\Hbb}^2 = \innerprod{w}{w}_{\Hbb}$ is the squared norm of $w$ induced by the inner product, $\Lcal(\cdot,\cdot)$ is a suitable loss function, and $C>0$ is a regularization parameter balancing the two parts in the objective function. In the classic case where $\Hbb$ can be identified as $\Rbb^n$ itself, $\|w\|_{\Hbb}$ reduces to the Euclidean norm $\|\wb\|$ with $\wb=[w_1,\dots,w_n]^\top$. Moreover, the quantity $1/\|\wb\|$ can be interpreted as the width of the \emph{margin} between the decision hyperplane (corresponding to the equation $\wb^\top \xb +b = 0$) and the nearest points in each class, so that minimizing $\|\wb\|^2$ is equivalent to maximizing the margin width, an intuitive measure of robustness of the classifier. As for the loss function, we adopt the most natural choice:
\begin{equation}
\Lcal_{0/1} (y, \tilde{f}(\xb)) := H(1-y\tilde{f}(\xb))
\end{equation}
where $H$ is the (Heaviside) unit step function
\begin{equation}\label{func_unit_step}
H(t) = \begin{cases}
1, & t>0\\
0, & t\leq 0,
\end{cases}
\end{equation}
see also Fig.~\ref{fig:step_func}.
%\bin{[We need to MINIMIZE the future reference to $H$ in order not to complicate the notation.]}
%\bin{[DRAW a picture for the step function.]}

%\begin{center}
\begin{figure}
	\centering
	\begin{tikzpicture}
	\draw[line width=0.5pt][->](-2,0)--(2,0)node[left,below,font=\tiny]{$t$};
	\draw[line width=0.5pt][->](0,-0.8)--(0,1.4)node[right,font=\tiny]at(-0.8,1){$H(t)$};
	\node[below,font=\tiny] at (0.15,0){0};
	\node[right,below,font=\tiny]at(0.14,1){1};
	\draw[color=blue, thick,smooth,domain=0:1.8]plot(\x,1);
	\draw[color=blue, thick,smooth,domain=-1.8:0.0]plot(\x,0);
	\draw[color=blue,fill=blue,smooth]circle(0.03);
	\draw[fill = white](0,1)circle(0.03);
	\end{tikzpicture}
	\caption{The unit step function.}
	\label{fig:step_func}
\end{figure}
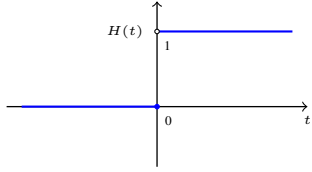
%\end{center}

In order to understand the loss function, notice that in the case where two classes of points are \emph{linearly separable}, one can always identify a subset of decision hyperplanes such that $y_i f(\xb_i)\geq 1$ for all $i\in\Nbb_m$ \cite{vapnik2000nature}. In the \emph{linearly inseparable} case, however, the inequality can be violated by some data points and such violations are in turn penalized since the loss function now reads as
\begin{equation}
\Lcal_{0/1} (y, \tilde{f}(\xb)) = 
\begin{cases}
1, & \text{if}\ 1-y\tilde{f}(\xb) >0\\
0, & \text{if}\ 1-y\tilde{f}(\xb) \leq 0.
\end{cases}
\end{equation}
It is this latter case that will be the focus of this paper.

The optimization problem \eqref{optim_infinit_dim} is infinite-dimensional in general due to the ambient space $\Hbb$. It can however, be reduced to a \emph{finite-dimensional} one via the celebrated \emph{representer theorem} \cite{kimeldorf1971some}. More precisely, by the \emph{semiparametric} representer theorem \cite{scholkopf-smola2001earning}, any minimizer of \eqref{optim_infinit_dim} must have the form
\begin{equation}\label{discrimi_func_parametric}
\tilde f(\cdot) = \sum_{i} w_i \kappa(\,\cdot\,, \xb_i) + b,
\end{equation}
so that the desired function $w(\cdot)$ is completely characterized by the linear combination of the \emph{kernel sections} $\kappa(\,\cdot\,, \xb_i)$, and the coefficients in $\wb=[w_1,\dots,w_m]^\top$ become the new unknowns. After some algebra involving the \emph{kernel trick}, we are left with the following finite-dimensional optimization problem:
\begin{equation}\label{optim_finit_dim}
\min_{\substack{\wb\in\Rbb^m, \, b\in\Rbb}}\ \ J(\wb,b) := \frac{1}{2} \wb^\top K \wb + C \|(\oneb - A\wb -b\yb)_+\|_0
\end{equation}
where,
\begin{itemize}
	\item $K=K^\top$ is the \emph{kernel matrix}
	\begin{equation}
	\bmat \kappa(\xb_1, \xb_1) & \cdots & \kappa(\xb_1,\xb_m)\\
	\vdots & \ddots & \vdots \\
	\kappa(\xb_m,\xb_1) & \cdots & \kappa(\xb_m,\xb_m) \emat \in \Rbb^{m\times m}
	\end{equation}
	which is \emph{positive semidefinite} by construction,
	\item $\oneb\in\Rbb^m$ is a vector whose components are all $1$'s, 
	\item $\yb = [y_1,\dots, y_m]^\top$ is the vector of labels, 
	\item the matrix $A = D_{\yb} K$ is such that $D_{\yb} = \diag(\yb)$ is the diagonal matrix whose $(i,i)$ entry is $y_i$, 
	\item the function $t_+ := \max\{0, t\}$ takes the positive part of the argument when applied to a scalar\footnote{It is known as the ReLU (Rectified Linear Unit) activation function in the context of artificial neural networks.}, and $\vb_+ := [(v_1)_+, \dots, (v_m)_+]^\top$ represents componentwise application of the scalar function,
	\item $\|\vb\|_0$ is the $\ell_0$-norm\footnote{The term ``norm'' is abused here since strictly speaking, ``$\ell^p$-norms'' are not \emph{bona fide} norms for $0\leq p<1$ due to the violation of the triangle inequality.} that counts the number of nonzero components in the vector $\vb$.
\end{itemize}
Clearly, the composite function $\|\vb_+\|_0$ counts the number of (strictly) positive components in $\vb$. For a scalar $t$, it coincides with the step function in \eqref{func_unit_step}.

\begin{remark}\label{rem_redu}
	
	The above formulation includes the problem investigated in \cite{wang2021support} as a special case. To see this, consider the \emph{homogeneous polynomial} kernel
	\begin{equation}
	\kappa(\xb, \yb) = (\xb^\top \yb)^d
	\end{equation}
	with the degree parameter $d=1$. Then the discriminant function in \eqref{discrimi_func_parametric} becomes
	\begin{equation}
	\tilde{f}(\xb) = \sum_{i} w_i \xb_i^\top \xb + b = \tilde{\wb}^\top \xb +b,
	\end{equation}
	where $\tilde{\wb} := \sum_{i} w_i \xb_i\in\Rbb^n$ is identified as the new variable for optimization. Moreover, it is not difficult to verify the relation $\wb^\top K \wb = \tilde{\wb}^\top \tilde{\wb} = \|\tilde{\wb}\|^2$, so that the optimization problem in \cite{wang2021support} results. 
	%	In general, if the kernel function $\kappa(\xb,\yb)$ induces a \emph{finite-dimensional} RKHS $($as it is the case for polynomial kernels$)$, then the kernel matrix $K$, which is in fact a \emph{Gram matrix}, must be \emph{rank-deficient} when $m$ is sufficiently large. In such a situation, we have the rank factorization $K = X^\top X$ such that $X\in\Rbb^{r\times m}$ with $r=\rank K$. For example, in the previous case, we have $X=[\xb_1, \dots, \xb_m]\in\Rbb^{n\times m}$ if such an $X$ has full row rank. Then after a change of variables $\tilde{\wb}=X\wb$, the optimization problem reduces to
	%	\begin{equation}\label{optim_finit_dim_redu}
	%	\min_{\substack{\tilde\wb\in\Rbb^r, \, b\in\Rbb}}\ \ J(\tilde\wb,b) := \frac{1}{2} \|\tilde\wb\|^2 + C \|(\oneb - \tilde A\tilde\wb -b\yb)_+\|_0
	%	\end{equation}
	%	where the matrix $\tilde{A}=D_{\yb} X^\top\in\Rbb^{m\times r}$ is a tall and thin matrix. This is almost exactly the problem studied in \cite{wang2021support}.
	%	%In addition, the kernel matrix in this case admits the factorization $K = X^\top X$ where $X=[\xb_1, \dots, \xb_m]\in\Rbb^{n\times m}$. When $m$ is greater than $n$, the matrix $K$ must be rank-deficient.
\end{remark}

%\bin{[Bin: Notice that here $A$ is a square matrix of size $m\times m$ which can be quite large. In contrast, \cite{wang2021support} had a rectangular $A$ of size $m\times n$ (and later assumed $m>n$). For large-scale problems, i.e., when $m$ (the number of data) is large, we need to devise a suitable approximate algorithm in order to improve the solution speed, to be done in Section~\ref{sec:large-scale}?]}

For reasons discussed in Remark~\ref{rem_redu}, in the remaining part of this paper, we shall always assume that the kernel matrix $K$ is \emph{positive definite}, which is indeed true for the \emph{Gaussian} kernel 
\begin{equation}\label{Gaussian_kernel}
\kappa(\xb,\yb) = \exp \left( -\frac{\|\xb-\yb\|^2}{2\sigma^2}\right),
\end{equation}
where $\sigma>0$ is a parameter (called \emph{hyperparameter}), see \cite{slavakis2014online}.
In such a case, the matrix $A=D_{\yb} K$ in \eqref{optim_finit_dim} is also \emph{invertible} since $D_{\yb}$ is a diagonal matrix\footnote{In fact, $D_{\yb}$ is both \emph{involutory} and \emph{orthogonal}, i.e., $D_{\yb}^2 = D_{\yb}^\top D_{\yb}=I$.} whose diagonal entries are the labels $-1$ or $1$.

%\bin{[Bin: Even though the Gaussian kernel gives a positive definite kernel matrix $K$, in practice when $m$ is large, we need to check its \emph{condition number} as it is connected with the numerical stability of computing $K^{-1}$. The latter operation is involved in the solution procedure for the optimization problem as described in Section~\ref{sec:algorithm}, see in particular Lemma~\ref{lem_repres_f_l}.]}

%\bin{[Bin: Even though the Gaussian kernel gives a positive definite kernel matrix $K$, in practice when $m$ is large, we need to check its \emph{condition number} as it is connected with the numerical stability of computing $K^{-1}$. The latter operation is involved in the solution procedure for the optimization problem as described in Section~\ref{sec:algorithm}, see in particular Lemma~\ref{lem_repres_f_l}.]}

\section{Problem formulation: the multiple-kernel case}\label{sec:MKL}

In all kernel-based methods, the selection of a suitable kernel and its parameter is a major issue. Usually, this is done via cross-validation which inevitably has an ad-hoc flavor. An active research area to handle such an issue is called Multiple Kernel Learning (MKL), where one employs a set of different kernels and let the optimization procedure determine the proper combination. One possibility in this direction is to consider the nonlinear modeling function as follows:
\begin{equation}\label{multi-kernel_representer}
\begin{aligned}
\tilde f(\xb) & = \sum_{\ell} f_\ell(\xb) + b \\
& = \sum_{\ell} d_\ell \sum_{i} w_{i} \kappa_\ell(\xb,\xb_i) + b,
\end{aligned}
\end{equation}
where for each $\ell\in\Nbb_L$, $f_\ell$ lives in a different RKHS $\Hbb_\ell'$ corresponding to the kernel function $d_\ell \kappa_\ell(\cdot,\cdot)$, the parameters $d_\ell, b, w_{i}\in\Rbb$, and $\xb_i$ comes from the training data. In other words, the decision function $\tilde{f}$ is parametrized by $(\wb,\db,b)\in \Rbb^{m+L+1}$. 
In order to formally state our MKL optimization problem for the \LzeroneSVM, we need to borrow the functional space setup from \cite{rakotomamonjy2008simplemkl}.

%\subsection{The functional space}

%\bin{TBA...} \bin{[SIMPLIFY the discussion!]}

For each $\ell\in\Nbb_L$, let $\Hbb_\ell$ be a RKHS of functions on $\Xcal\subset \Rbb^n$ with the kernel $\kappa_\ell(\cdot,\cdot)$ and the inner product $\innerprod{\cdot}{\cdot}_{\Hbb_\ell}$. Moreover, take $d_\ell\in\Rbb_+$, and define a Hilbert space $\Hbb_\ell'\subset \Hbb_\ell$ as
\begin{equation}
\mathbb{H}_\ell^\prime :=\left\{ f\in \mathbb{H}_\ell : \frac{\|f\|_{\mathbb{H}_\ell}}{d_\ell}<\infty\right\}
\end{equation}
endowed with the inner product
\begin{equation}
\left< f, g \right>_{\mathbb{H}_\ell^\prime}=\frac{\left< f, g \right>_{\Hbb_\ell}}{d_\ell}.
\end{equation}
%$$\left< f, g \right>_{\mathbb{H}_\ell^\prime}=\frac{\left< f, g \right>_{\Hbb_\ell}}{d_\ell}.$$
In this paper, we use the convention that ${x}/{0}=0$ if $x=0$ and $\infty$ otherwise. This means that, if $d_\ell=0$ then a function $f\in \mathbb{H}_\ell$ belongs to the Hilbert space $\mathbb{H}_\ell^\prime$ only if $f=0$ . In such a case, $\mathbb{H}_\ell^\prime$ becomes a singleton containing only the null element of $\mathbb{H}_\ell$.
Within this framework,  $\Hbb_\ell^\prime$ is a RKHS with the kernel $\kappa_\ell'(\xb,\yb)=d_\ell \kappa_\ell(\xb,\yb)$ since \\
\begin{equation}
\begin{split}
\forall f\in \Hbb_\ell^\prime\subset \Hbb_\ell,\ f(\textbf{x})&=\left<f(\cdot),\kappa_\ell(\xb,\cdot)\right>_{\Hbb_\ell} \\
%	~\\
&=\frac{1}{d_\ell} \left<f(\cdot),d_\ell\kappa_\ell(\xb,\cdot)\right>_{\Hbb_\ell} \\
%	~\\
&=\left<f(\cdot),d_\ell\kappa_\ell(\xb,\cdot)\right>_{\Hbb_\ell^\prime}.
\end{split}	
\end{equation}

Define $\Fbb:=\Hbb_1'\times \Hbb_2' \times \cdots \times \Hbb_L'$ as the Cartesian product of the RKHSs $\{\Hbb_\ell'\}$, which is itself a Hilbert space with the inner product
\begin{equation}
\innerprod{(f_1, \dots, f_L)}{(g_1, \dots, g_L)}_\Fbb = \sum_{\ell} \innerprod{f_\ell}{g_\ell}_{\Hbb_\ell'}.
\end{equation}
Let $\Hbb:=\bigoplus_{\ell=1}^L \Hbb_\ell'$ be the \emph{direct sum} of the RKHSs $\{\Hbb_\ell'\}$, which is also a RKHS with the kernel function
\begin{equation}\label{combina_kernels}
\kappa(\xb,\yb) = \sum_{\ell} d_\ell \kappa_\ell(\xb,\yb),
\end{equation}
see \cite{aronszajn1950theory}.
%\bin{[SIMPLIFY the exposition!]}
Moreover, the squared norm of $f\in \Hbb$ is known as
\begin{equation}\label{sqr_norm_direct_sum}
\begin{aligned}
\|f\|^2_{\Hbb} = \min \left\{ \sum_{\ell} \|f_\ell\|^2_{\Hbb_\ell'} = \sum_{\ell} \frac{1}{d_\ell} \|f_\ell\|^2_{\Hbb_\ell} : f=\sum_{\ell} f_\ell \right. \\
\left. \ \text{such that}\ f_\ell\in\Hbb_\ell' \right\}
\end{aligned}
\end{equation}
The vector $\db=(d_1,\dots,d_L)\in\Rbb_+^L$ is seen as a tunable parameter for the linear combination of kernels $\{\kappa_\ell\}$ in \eqref{combina_kernels}.

%\bin{[TBC...]}

%\subsection{The optimization problem}

A typical MKL task can be formulated as
\begin{subequations}\label{opt_MKL_01}
	\begin{align}
	& \underset{\substack{\fb=(f_1, \dots, f_L)\in \Fbb \\ \db\in\Rbb^L,\ b\in\Rbb}}{\min}
	& & \frac{1}{2} \sum_{\ell} \frac{1}{d_\ell} \|f_\ell\|^2_{\Hbb_\ell} + C\sum_{i} \Lcal_{0/1}(y_i, \tilde f(\xb_i)) \nonumber \\
	& \text{s.t.}
	& & d_\ell \geq 0,\ \ell\in\Nbb_L \label{d_l_simplex_01} \\ 
	& & & \sum_{\ell} d_\ell =1 \label{d_l_simplex_02} \\
	& & & \tilde f(\cdot) = \sum_{\ell} f_\ell(\cdot) + b \nonumber
	\end{align}
\end{subequations}
where $C>0$ is a regularization parameter. For our SVM task,  the first (regularization) term in the objective function is chosen so due to its convexity (see \cite[Appendix~A.1]{rakotomamonjy2008simplemkl}), which makes the problem tractable.

\section{Optimality theory}\label{sec:optimality}

In this section, we give some theoretical results on the existence of an optimal solution %to \eqref{opt_MKL_02} and equivalently, 
to \eqref{opt_MKL_01}, and the KKT-like first-order optimality conditions.
%Throughout, we assume that the composite kernel matrix $\Kcal(\db):=\sum_{\ell} d_\ell K_\ell$ associated to the training data is \emph{positive definite for all $\db$ in the standard $(L-1)$-simplex} as defined by the two constraints \eqref{d_l_simplex_01} and \eqref{d_l_simplex_02}, where each $K_\ell$ is the respective kernel matrix for the RKHS $\Hbb_\ell$. 
Our standing assumption is that each $K_\ell$ is positive definite as e.g., in the case of Gaussian kernels with different hyperparameters. We state this below formally.

\begin{assumption}\label{assump_posi_def_K_l}
	Given the data points $\{\xb_i : i\in \Nbb_m\}$, each $m\times m$ kernel matrix $K_\ell$, whose $(i, j)$ entry is $\kappa_\ell(\xb_i, \xb_j)$, is positive definite for $\ell\in\Nbb_L$.
\end{assumption}

The main results are given in the next two subsections.

\subsection{Existence of a minimizer}\label{subsec:exist}

The existence theorem is provided below with some hints of the proof.

\begin{theorem}\label{thm_exist}
	Assume that the intercept $b$ takes value from a closed interval $\Ical:=[-M, M]$ where $M>0$ is a sufficiently large number. Then the optimization problem \eqref{opt_MKL_01} has a global minimizer and the set of all global minimizers is bounded.
\end{theorem}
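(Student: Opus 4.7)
My plan is to reduce the infinite-dimensional optimization in $\fb\in\Fbb$ to a finite-dimensional one via the representer theorem, establish that the sublevel sets of the objective are bounded by comparing against a trivial feasible point, and then invoke a Weierstrass-type argument based on lower semicontinuity of the objective over a compact set.

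For the finite-dimensional reduction, I would apply the representer theorem component-wise. Decomposing each $f_\ell = f_\ell^\parallel + f_\ell^\perp$ with $f_\ell^\parallel$ lying in the span of $\{\kappa_\ell(\cdot,\xb_i) : i\in\Nbb_m\}$, the reproducing property yields $f_\ell(\xb_i)=f_\ell^\parallel(\xb_i)$, so the loss depends only on $f_\ell^\parallel$, while $\|f_\ell\|_{\Hbb_\ell}^2\geq \|f_\ell^\parallel\|_{\Hbb_\ell}^2$. Hence one may assume $f_\ell=\sum_i \alpha_{i,\ell}\,\kappa_\ell(\cdot,\xb_i)$ and rewrite the problem in the Euclidean variables $(\alphab,\db,b)\in\Rbb^{mL}\times\Rbb_+^L\times\Ical$, with $\|f_\ell\|_{\Hbb_\ell}^2 = \alphab_\ell^\top K_\ell \alphab_\ell$ in the regularizer.

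Next I would evaluate the objective at the trivial feasible point $\fb\equiv 0$ and $b=0$, which yields a value at most $Cm$; hence every minimizer lies in the sublevel set $S:=\{J\leq Cm\}$. On $S$, nonnegativity of the loss gives $\sum_\ell \alphab_\ell^\top K_\ell\alphab_\ell/d_\ell \leq 2Cm$, whence (using $d_\ell\leq 1$ and the convention that $d_\ell=0$ forces $\alphab_\ell=0$) $\alphab_\ell^\top K_\ell \alphab_\ell \leq 2Cm$ for every $\ell$. By Assumption~\ref{assump_posi_def_K_l}, this gives $\|\alphab_\ell\|^2 \leq 2Cm/\lambda_{\min}(K_\ell)$, which combined with compactness of the simplex for $\db$ and of $\Ical=[-M,M]$ for $b$ shows that $S$ is bounded in all variables.

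Finally, I would verify lower semicontinuity. The step function $H$ is LSC (the only issue is at $t=0$, where $H(0)=0=\liminf_{s\to 0}H(s)$), so the loss is LSC in $(\alphab,b)$; the extended map $(d,s)\mapsto s^2/d$ with $0/0:=0$ and $s^2/0:=+\infty$ for $s\neq 0$ is LSC on $\Rbb_+\times\Rbb$, so the regularizer is LSC in $(\alphab,\db)$. Weierstrass's theorem on the closed bounded set $\overline{S}$ then produces a minimizer, and boundedness of the whole minimizer set follows from its containment in $S$. The main obstacle I anticipate is the careful handling of the boundary $d_\ell=0$: for sequences with $d_\ell^k\to 0$ while $\alphab_\ell^k$ stays bounded away from zero, the regularizer diverges consistently with the adopted convention so LSC is preserved, but this edge case must be verified explicitly to ensure a limit point of a minimizing sequence is feasible and attains the infimum rather than slipping into the extended-real value $+\infty$.
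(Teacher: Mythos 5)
Your proposal is correct and follows essentially the same route as the paper's (very brief) proof sketch: reduce to finite dimensions via the representer theorem, show the sublevel sets are compact using lower semicontinuity of the step function $H$ (and of $(d_\ell,f_\ell)\mapsto\|f_\ell\|_{\Hbb_\ell}^2/d_\ell$ under the stated convention), and conclude by Weierstrass. In fact you supply the details the paper omits --- the orthogonal-complement argument, the explicit bound $\alphab_\ell^\top K_\ell\alphab_\ell\le 2Cm$ from the trivial feasible point, and the boundary case $d_\ell\to 0$ --- all of which check out under Assumption~\ref{assump_posi_def_K_l}.
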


\begin{proof}[Sketch of the proof]
	First we appeal to the representer theorem to reduce \eqref{opt_MKL_01} to a finite-dimensional form via \eqref{multi-kernel_representer}.
	Then one can show that the sublevel sets of the objective function are compact using the fact that the step function $H$ in \eqref{func_unit_step} is lower-semicontinuous.
%	Notice then that the step function $H$ in \eqref{func_unit_step} is lower-semicontinuous and so is the objective function. In consequence, the sublevel set of the objective function is closed. One can also show that the sublevel set is bounded, and hence compact. 
	Therefore, a minimizer exists by the extreme value theorem of Weierstrass.	
\end{proof}
%\begin{remark}
%	The (slightly artificial) assumption that $b\in\Ical$ is a convenient technical condition in order to obtain the compactness of the sublevel set of $J(\wb,b)$. Otherwise, it will be hard to bound the value of $b$ since it only contributes to the term involving the $\ell_0$-norm which is always bounded. In principle, the objective function $J(\wb,b)$ may have a better minimum on the larger set $S_1:=\{ (\wb,b) \in \Rbb^{m+1} : J(\wb,b) \leq C_1 \}$.
%\end{remark}
%
\subsection{Characterization of global and local minimizers}\label{subsec:optim_cond}

%Before deriving the optimality conditions, let us introduce an extra variable via $\ub:=\oneb - A\wb -b\yb$. Then trivially, the unconstrained optimization problem is equivalent to the constrained version:
%\begin{equation}\label{optim_constrained}
%\begin{aligned}
%& \underset{\wb\in\Rbb^m, \, b\in\Rbb, \, \ub\in\Rbb^m}{\text{min}}
%& & \frac{1}{2} \wb^\top K \wb + C \|\ub_+\|_0 \\
%& \quad \ \ \ \quad \text{s.t.}
%& & \ub + A\wb + b\yb = \oneb,
%\end{aligned}
%\end{equation}
%where the equality constraint is \emph{affine} in the variables $(\wb, b, \ub)$.

The last equality constraint in \eqref{opt_MKL_01} can be safely eliminated by a substitution into the objective function. Next, define a new variable $\ub\in\Rbb^m$ by letting $u_i = 1-y_i(f(\xb_i)+b)$ where $f=\sum_{\ell} f_\ell$. 
%Then the next lemma is a fairly straightforward result.
%For the sake of consistency, let us 
We can then rewrite \eqref{opt_MKL_01} in the following way:
\begin{subequations}\label{opt_MKL_01.1}
	\begin{align}
	& \underset{\substack{\fb\in \Fbb,\ \db\in\Rbb^L\\ b\in\Rbb,\ \ub\in\Rbb^m}}{\min}
	& & \frac{1}{2} \sum_{\ell} \frac{1}{d_\ell} \|f_\ell\|^2_{\Hbb_\ell} + C \|\ub_+\|_0 \label{obj_func_J} \\
	& \text{s.t.}
	& & \eqref{d_l_simplex_01}\ \text{and}\ \eqref{d_l_simplex_02} \nonumber \\
	& & & u_i + y_i \left( f(\xb_i) +b \right) =1,\ i\in\Nbb_m, \label{equal_constraint_u}
	\end{align}
\end{subequations}
where the last equality constraint is obviously \emph{affine} in the ``variables'' $(\fb, b, \ub)$.%, and we have written $f=\sum_{\ell} f_\ell$ for simplicity.

Before stating the optimality conditions, we need a generalized definition of a stationary point in nonlinear programming.

\begin{definition}[P-stationary point of \eqref{opt_MKL_01.1}]\label{def_P_stationary}
	Fix a regularization parameter $C>0$. We call $(\fb^*, \db^*, b^*, \ub^*)$ a proximal stationary $($abbreviated as P-stationary$)$ point of \eqref{opt_MKL_01.1} if there exists a vector $(\thetab^*, \alpha^*, \lambdab^*) \in \Rbb^{L+1+m}$ and a number $\gamma>0$ such that
	%	\begin{equation}
	%	\left\{\begin{aligned}
	%	  \\
	%	 \yb^\top \lambdab^* & = 0 \\
	%	 \ub^* + A\wb^* + b^*\yb & = \oneb \\
	%	 \prox_{\gamma C\stepfunc} (\ub^* - \gamma\lambdab^*) & = \ub^*
	%	\end{aligned}\right.
	%	\end{equation}
	\begin{subequations}\label{P-stationary_cond}
		\begin{align}
		d_\ell^\ast & \ge 0, \ \ell\in\Nbb_L \label{primal_constraint_01} \\
		\sum_{\ell} d_\ell^\ast & = 1 \label{primal_constraint_02} \\
		u_i^* + y_i \left( f^*(\xb_i) +b^* \right) & =1,\ i\in\Nbb_m \label{primal_constraint_03} \\ 
		\theta_\ell^\ast & \ge 0, \ \ell\in\Nbb_L \label{dual_constraint} \\
		\theta_\ell^\ast d_\ell^\ast & = 0, \ \ell\in\Nbb_L \label{complem_slackness} \\
		\forall \ell\in\Nbb_L,\quad \frac{1}{d_\ell^\ast}f_\ell^\ast(\cdot) & =-\sum_{i}\lambda_i^* y_i \kappa_\ell(\,\cdot\,, \xb_i) \label{stationary_cond_Lagrangian_f} \\
		-\frac{1}{2(d_\ell^\ast)^2}\|f_\ell^\ast\|^2_{\Hbb_\ell}+\alpha^\ast-\theta_\ell^\ast & = 0, \ \ell\in\Nbb_L \label{stationary_cond_Lagrangian_d} \\
		\yb^\top \lambdab^* & = 0 \label{stationary_cond_Lagrangian_b} \\
		\prox_{\gamma C \|(\cdot)_+\|_0} (\ub^\ast-\gamma \lambdab^\ast) & = \ub^\ast, \label{stationary_cond_Lagrangian_u}
		\end{align}
	\end{subequations}
	where the proximal operator is defined as
	\begin{equation}\label{prox_def}
	\prox_{\gamma C\stepfunc} (\zb) := \underset{\vb\in\Rbb^m}{\argmin} \quad C\|\vb_+\|_0 + \frac{1}{2\gamma} \|\vb-\zb\|^2.
	\end{equation}
\end{definition}

According to \cite{wang2021support}, for a scalar $z$ the proximal operator in \eqref{P-stationary_cond} can be evaluated in a closed form: 
\begin{equation}\label{prox_op_scalar}
\prox_{\gamma C\stepfunc} (z) =
\left\{\begin{aligned}
0, & \quad 0<z\leq \sqrt{2\gamma C} \\
z, & \quad z>\sqrt{2\gamma C} \ \text{or} \ z\leq 0,
\end{aligned}\right.
\end{equation}
see Fig.~\ref{fig:prox_operat}.
%\bin{[DRAW a picture for this thresholding function.]}
For a vector $\zb\in\Rbb^m$, the proximal operator in \eqref{prox_def} is evaluated by applying the scalar version \eqref{prox_op_scalar} to each component of $\zb$, namely
\begin{equation}\label{prox_op_vec}
[\prox_{\gamma C\stepfunc} (\zb)]_i = \prox_{\gamma C\stepfunc} (z_i),
\end{equation}
because the objective function on the right-hand side of \eqref{prox_def} can be decomposed as
%$\sum_{i} C \|(v_i)_+\|_0 + \frac{1}{2\gamma} (v_i-z_i)^2$. %with the step function $H$ in \eqref{func_unit_step}.
\begin{equation*}
\sum_{i} C \|(v_i)_+\|_0 + \frac{1}{2\gamma} (v_i-z_i)^2.
\end{equation*}
Formula \eqref{prox_op_vec} is called ``$L_{0/1}$ proximal operator'' in \cite{wang2021support}.

\begin{figure}
	\centering
	\begin{tikzpicture}
	\draw[line width=0.5pt][->](-1.8,0)--(2,0)node[left,below,font=\tiny]{$z$};
	\draw[line width=0.5pt][->](0,-1.5)--(0,2);
	\node[below,font=\tiny] at (0.09,0.1){0};
	\draw[color=red,thick,smooth][-](-1.2,-1.2)--(0,0);
	\draw[color=red,thick,smooth][-](0,0)--(0.8,0);
	\draw[color=red,fill=red,smooth](0.8,0)circle(0.03);
	\node[left,below,font=\tiny]at(0.8,0){$_{\sqrt{2\gamma C}}$};
	\draw[color=red][dashed] (0.8,0)--(0.8,0.8); %点状虚线
	\draw[color=red,thick,smooth][-](0.8,0.8)--(1.8,1.8);
	\node[above,font=\tiny] at(1.2,1.7) {$\quad\mathrm{prox}_{\gamma C\|(\cdot)_+\|_0}(z)$};
	\end{tikzpicture}
	\caption{The $L_{0/1}$ proximal operator on the real line.}
	\label{fig:prox_operat}
\end{figure}
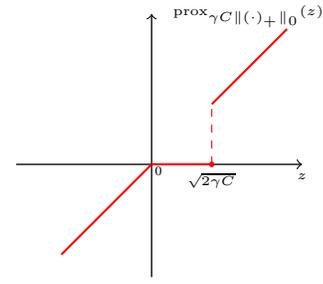

The components of the vector $(\thetab^*, \alpha^*, \lambdab^*)$ in Definition~\ref{def_P_stationary} can be interpreted as the \emph{Lagrange multipliers} as it appeared in a smooth SVM problem and played a similar role in the optimality conditions \cite{cortes1995support}, although a direct dual analysis here can be difficult due the presence of the nonsmooth nonconvex function $\|(\cdot)_+\|_0$.
The set of equations \eqref{P-stationary_cond} are understood as the \emph{KKT-like} optimality conditions for the optimization problem \eqref{opt_MKL_01.1}, where \eqref{primal_constraint_01}, \eqref{primal_constraint_02}, and \eqref{primal_constraint_03} are the primal constraints, \eqref{dual_constraint} the dual constraints, \eqref{complem_slackness} the complementary slackness, and \eqref{stationary_cond_Lagrangian_f}, \eqref{stationary_cond_Lagrangian_d}, \eqref{stationary_cond_Lagrangian_b}, and \eqref{stationary_cond_Lagrangian_u} are the stationarity conditions of the Lagrangian with respect to the primal variables. Notice that the only nonsmooth term presented is $\|\ub_+\|_0$, and the corresponding stationarity condition \eqref{stationary_cond_Lagrangian_u} with respect to $\ub$ is given by the proximal operator \eqref{prox_def}.

The following theorem connects the optimality conditions for \eqref{opt_MKL_01.1} to P-stationary points. The proof is rather lengthy and is omitted due to the space limitation.

\begin{theorem}\label{thm_optimality}
	The global and local minimizers of \eqref{opt_MKL_01.1} admit the following characterizations:	
	\begin{enumerate}
		\item A global minimizer is a P-stationary point with $0<\gamma< C_1 $,
		where the positive number
		\begin{equation*}
		C_1 = \min\left\{\lambda_{\min}(\Kcal(\db)) : \db\ \text{satisfies} \ \eqref{d_l_simplex_01}\ \text{and}\ \eqref{d_l_simplex_02}\right\}
		\end{equation*}
		%		\bin{[CHECK this upper bound.]}
		in which $\lambda_{\min}(\cdot)$ denotes the smallest eigenvalue of a matrix.
		
		\item Any P-stationary point $($with $\gamma>0$$)$ is also a local minimizer of \eqref{opt_MKL_01.1}.
	\end{enumerate}
\end{theorem}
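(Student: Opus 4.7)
The plan is to treat the two parts separately after a common preparatory step: by the representer theorem already invoked in Theorem~\ref{thm_exist}, one can reduce \eqref{opt_MKL_01.1} to a finite-dimensional program in $(\wb,\db,b,\ub)\in\Rbb^m\times\Rbb_+^L\times\Rbb\times\Rbb^m$, writing $f_\ell=d_\ell\sum_i w_i\kappa_\ell(\cdot,\xb_i)$ so that $\tfrac{1}{d_\ell}\|f_\ell\|^2_{\Hbb_\ell}=d_\ell\,\wb^\top K_\ell\wb$ and \eqref{equal_constraint_u} becomes the affine system $\ub+D_\yb\Kcal(\db)\wb+b\yb=\oneb$ with $\Kcal(\db)=\sum_\ell d_\ell K_\ell$. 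In this form the regularizer is jointly convex in $(\wb,\db)$ on the simplex \cite[Appendix~A.1]{rakotomamonjy2008simplemkl}, while $C\stepfunc$ remains the only nonsmooth/nonconvex ingredient.

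For part (1), I would start from a global minimizer $(\fb^\ast,\db^\ast,b^\ast,\ub^\ast)$ and form the Lagrangian
\begin{equation*}
\mathscr{L}=\tfrac12\sum_{\ell}\tfrac{1}{d_\ell}\|f_\ell\|^2_{\Hbb_\ell}+C\|\ub_+\|_0-\sum_{\ell}\theta_\ell d_\ell+\alpha\bigl(\textstyle\sum_{\ell} d_\ell-1\bigr)+\sum_{i}\lambda_i\bigl(u_i+y_i(f(\xb_i)+b)-1\bigr).
\end{equation*}
Fréchet differentiation of $\mathscr{L}$ in $f_\ell$ together with the reproducing property yields \eqref{stationary_cond_Lagrangian_f}; partial differentiation in $d_\ell$ and $b$ gives \eqref{stationary_cond_Lagrangian_d} and \eqref{stationary_cond_Lagrangian_b}; and standard KKT for $d_\ell\geq 0$ supplies \eqref{dual_constraint} and \eqref{complem_slackness}. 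Combining \eqref{stationary_cond_Lagrangian_f} with the representer form of $f_\ell^\ast$ yields the useful identity $\lambdab^\ast=-D_\yb\wb^\ast$. The nontrivial step is \eqref{stationary_cond_Lagrangian_u}: using the affine constraint to eliminate $\wb$ as $\wb=\Kcal(\db^\ast)^{-1}D_\yb(\oneb-\ub-b^\ast\yb)$, a direct algebraic computation shows that the increment of the smooth part under a perturbation $\ub^\ast\mapsto\ub^\ast+\delta\ub$ with $\db^\ast,b^\ast$ fixed equals $\lambdab^{\ast\top}\delta\ub+\tfrac12\delta\ub^\top M\delta\ub$, where $M:=D_\yb\Kcal(\db^\ast)^{-1}D_\yb\preceq C_1^{-1}I$. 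Global minimality therefore forces
\begin{equation*}
C\|(\ub^\ast+\delta\ub)_+\|_0+\lambdab^{\ast\top}\delta\ub+\tfrac{1}{2\gamma}\|\delta\ub\|^2\ \geq\ C\|\ub^\ast_+\|_0,\quad\text{for every } 0<\gamma\leq C_1,
\end{equation*}
which by the definition \eqref{prox_def} is precisely $\ub^\ast=\prox_{\gamma C\stepfunc}(\ub^\ast-\gamma\lambdab^\ast)$.

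For part (2), assume P-stationarity and take a small feasible perturbation $(\delta\fb,\delta\db,\delta b,\delta\ub)$. Joint convexity of the smooth regularizer, together with \eqref{stationary_cond_Lagrangian_f}--\eqref{stationary_cond_Lagrangian_b} and the complementary slackness \eqref{complem_slackness}, lower-bounds its increment by $-\lambdab^{\ast\top}\delta\ub+o(\|\cdot\|)$ after invoking the linearized equality constraint to trade the $(\delta\fb,\delta\db,\delta b)$ variations for $\delta\ub$. The nonsmooth part is controlled by the proximal fixed-point condition \eqref{stationary_cond_Lagrangian_u}: via the closed form \eqref{prox_op_scalar}, it forces a definite sign/magnitude pattern on $\ub^\ast$ (each coordinate lies in the regime $u_i^\ast>\sqrt{2\gamma C}$ or $u_i^\ast\leq 0$, and the intermediate zone is empty), and a componentwise check on each regime shows $C\|(\ub^\ast+\delta\ub)_+\|_0+\lambdab^{\ast\top}\delta\ub\geq C\|\ub^\ast_+\|_0$ for all sufficiently small $\|\delta\ub\|$. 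Summing the two bounds yields $J(\fb^\ast+\delta\fb,\dots)\geq J(\fb^\ast,\dots)$ in a neighborhood, proving local minimality.

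The hardest part is obtaining the sharp threshold $C_1=\min_{\db}\lambda_{\min}(\Kcal(\db))$ in part~(1): one needs a uniform lower bound on $\Kcal(\db)$ across the simplex, for which Assumption~\ref{assump_posi_def_K_l} is essential, and extra care is required at boundary points where some $d_\ell$ vanishes and $\Hbb_\ell'$ collapses to $\{0\}$. A secondary subtlety is that the perturbation analysis in both directions must simultaneously respect \eqref{d_l_simplex_01}--\eqref{d_l_simplex_02} while the $L_{0/1}$ term couples non-smoothly with $\ub$; disentangling these couplings without losing the exact constant $C_1$ is delicate and is presumably why the authors defer the full argument.
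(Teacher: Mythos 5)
The paper never actually supplies a proof of Theorem~\ref{thm_optimality} (it is declared ``rather lengthy and \dots omitted due to the space limitation''), so there is nothing in-text to compare you against; your argument has to stand on its own. Its architecture is the natural one --- the finite-dimensional lift of the single-kernel argument in \cite{wang2021support} --- and your part~(1) is essentially right: freezing $\ub$ and applying smooth KKT on the remaining affinely-constrained convex slice yields \eqref{dual_constraint}--\eqref{stationary_cond_Lagrangian_b}; eliminating $\wb=\Kcal(\db^*)^{-1}D_{\yb}(\oneb-\ub-b^*\yb)$ makes the smooth term $\tfrac12\vb^\top M\vb$ with $\vb=\oneb-\ub-b^*\yb$ and $M=D_{\yb}\Kcal(\db^*)^{-1}D_{\yb}$, whose increment is exactly $\lambdab^{*\top}\delta\ub+\tfrac12\delta\ub^\top M\delta\ub$ via $\lambdab^*=-D_{\yb}\wb^*$, and $M\preceq C_1^{-1}I$ converts global minimality into the defining inequality of \eqref{prox_def} for $\gamma< C_1$. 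Two loose ends there: the representer theorem a priori gives $\ell$-dependent coefficients $w_{\ell i}$, and the sharing $w_{\ell i}=-\lambda_i y_i$ is a \emph{consequence} of stationarity in each $f_\ell$, not something you may assume at the outset; and the case $d_\ell^*=0$ needs a separate sentence since $\tfrac{1}{d_\ell}\|f_\ell\|^2_{\Hbb_\ell}$ is only the closed perspective function there.

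Part~(2) as written does not close, for two concrete reasons. First, a sign slip: you lower-bound the smooth increment by $-\lambdab^{*\top}\delta\ub$ and the nonsmooth increment by $-\lambdab^{*\top}\delta\ub$, which sums to $-2\lambdab^{*\top}\delta\ub$, not to $0$. The correct chain is $\innerprod{\nabla g}{\delta}=-\sum_i\lambda_i^*y_i(\delta f(\xb_i)+\delta b)+\sum_\ell\theta_\ell^*\delta d_\ell\geq+\lambdab^{*\top}\delta\ub$ after substituting $\delta u_i=-y_i(\delta f(\xb_i)+\delta b)$, so the smooth increment is bounded below by $+\lambdab^{*\top}\delta\ub$, which then cancels against the nonsmooth bound. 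Second, the hedge ``$+\,o(\|\cdot\|)$'' is fatal if taken literally: on perturbations supported where $\lambda_i^*=0$ the two linear terms cancel \emph{exactly}, and an uncontrolled $o(\|\delta\|)$ remainder of either sign would destroy the conclusion $J\geq J^*$. You must invoke the exact subgradient inequality of the jointly convex perspective function $\sum_\ell\|f_\ell\|_{\Hbb_\ell}^2/d_\ell$ (valid with no error term), not a Taylor expansion. With those two repairs, the componentwise case analysis you describe for $\prox_{\gamma C\stepfunc}$ --- empty intermediate zone, $\lambda_i^*\in[-\sqrt{2C/\gamma},0)$ only where $u_i^*=0$, and the bound $C-\sqrt{2C/\gamma}\,\delta u_i\geq 0$ for $\delta u_i\leq\sqrt{\gamma C/2}$ --- does give the required local lower bound, so the defects are local and fixable rather than structural.
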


\section{Algorithm design}\label{sec:algorithm}

%\bin{[THIS part is left for Yijie Shi.]}

%\bin{For example, first-order ADMM, for the MKL-$L_{0/1}$-SVM task...}

In this section, we take advantages of the Alternating Direction Method of Multipliers (ADMM) and working sets (active sets) to devise a first-order algorithm for our MKL-$L_{0/1}$-SVM optimization problem. More precisely, we aim to obtain a P-stationary point (Definition~\ref{def_P_stationary}) and hence  a local minimizer of \eqref{opt_MKL_01.1} by Theorem~\ref{thm_optimality}. 
The unique issue is that each $f_\ell\in\Hbb_\ell$ could be an infinite-dimensional object. Appealing to the KKT-like conditions \eqref{P-stationary_cond}, the next lemma says that $f_\ell$ admits another finite-dimensional representation which is suitable for numerical computation.

\begin{lemma}\label{lem_repres_f_l}
%	Assume that for $\ell\in\Nbb_L$, all the kernel matrices $K_\ell$ are positive definite. Then 
    Under Assumption~\ref{assump_posi_def_K_l}, each $f_\ell$ is completely represented by its values at the data points which are collected into a vector 
	%	$\vb_{f_\ell}=\bmat	f_\ell(\xb_1) & f_\ell(\xb_2) & \cdots & f_\ell(\xb_m)\emat^\top$.
	\begin{equation}\label{value_vec_f_l}
	\vb_{f_\ell} :=\bmat	f_\ell(\xb_1) & f_\ell(\xb_2) & \cdots & f_\ell(\xb_m)\emat^\top.
	\end{equation}
	Moreover, we have
	$\|f_\ell\|^2_{\Hbb_\ell}=\vb_{f_\ell}^\top K^{-1}_\ell \vb_{f_\ell}$.
\end{lemma}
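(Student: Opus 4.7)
The plan is to leverage the stationarity condition \eqref{stationary_cond_Lagrangian_f} from Definition \ref{def_P_stationary}, which already expresses $f_\ell^*$ at a P-stationary point as a finite linear combination of kernel sections. Specifically, I would rewrite \eqref{stationary_cond_Lagrangian_f} in the form
\begin{equation*}
f_\ell^*(\cdot) = \sum_i c_i^{(\ell)}\, \kappa_\ell(\cdot, \xb_i), \qquad c_i^{(\ell)} := -d_\ell^* \lambda_i^* y_i,
\end{equation*}
so that the a priori infinite-dimensional object $f_\ell^*$ is reduced to the coefficient vector $\cb^{(\ell)} \in \Rbb^m$. This is the same representer-theorem style collapse used in the single-kernel case of Section \ref{sec:prob}, now applied componentwise.

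Next, I would evaluate the above expression at each data point $\xb_j$ to obtain the linear system $\vb_{f_\ell} = K_\ell\, \cb^{(\ell)}$. Under Assumption \ref{assump_posi_def_K_l}, $K_\ell$ is positive definite and hence invertible, so $\cb^{(\ell)} = K_\ell^{-1} \vb_{f_\ell}$. This establishes a bijection between the coefficient representation of $f_\ell^*$ and the value vector $\vb_{f_\ell}$; since $\cb^{(\ell)}$ determines $f_\ell^*$ pointwise on all of $\Xcal$, we conclude that $f_\ell^*$ is completely represented by $\vb_{f_\ell}$, as claimed.

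For the norm identity, I would apply the reproducing property and bilinearity of the RKHS inner product to obtain
\begin{equation*}
\|f_\ell^*\|^2_{\Hbb_\ell} = \sum_{i,j} c_i^{(\ell)} c_j^{(\ell)} \kappa_\ell(\xb_i, \xb_j) = (\cb^{(\ell)})^\top K_\ell\, \cb^{(\ell)}.
\end{equation*}
Substituting $\cb^{(\ell)} = K_\ell^{-1} \vb_{f_\ell}$ and using the cancellation $K_\ell^{-1} K_\ell K_\ell^{-1} = K_\ell^{-1}$ then yields the desired $\|f_\ell^*\|^2_{\Hbb_\ell} = \vb_{f_\ell}^\top K_\ell^{-1} \vb_{f_\ell}$.

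I do not anticipate any serious difficulty here; the argument is essentially a finite-dimensional reformulation of a classical RKHS computation. The only point that merits care, and which I would treat as a separate trivial subcase, is the degenerate situation $d_\ell^* = 0$. There the convention $x/0 = 0$ adopted in Section \ref{sec:MKL} forces $f_\ell^* \equiv 0$, so $\vb_{f_\ell} = \zerob$ and both sides of the norm identity vanish consistently; equation \eqref{stationary_cond_Lagrangian_f} is then read in a limiting sense. Thus the \emph{main obstacle} is not the mathematics itself but the bookkeeping around this boundary case, which I would dispatch explicitly at the start of the proof.
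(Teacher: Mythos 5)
Your proposal is correct and follows essentially the same route as the paper's proof: invoke \eqref{stationary_cond_Lagrangian_f} to restrict $f_\ell$ to the span of the kernel sections, write $\vb_{f_\ell}=K_\ell\cb^{(\ell)}$, and compute the norm via the reproducing property and the invertibility of $K_\ell$ from Assumption~\ref{assump_posi_def_K_l}. Your explicit treatment of the boundary case $d_\ell^*=0$ is a small addition the paper leaves implicit, but it does not change the argument.
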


\begin{proof}
	According to \eqref{stationary_cond_Lagrangian_f}, we see that optimal $f_\ell$ is restricted to the linear span of the $m$ kernel sections $\{\kappa_\ell(\,\cdot\,,\xb_i)\}$. Let the coefficients of linear combination be $\{\tilde{w}_i\}$. Then it holds that $\vb_{f_\ell}=K_\ell\tilde{\wb}$ and $\|f_\ell\|^2_{\Hbb_\ell}=\tilde{\wb}^\top K_\ell \tilde{\wb} = \vb_{f_\ell}^\top K^{-1}_\ell \vb_{f_\ell}$ where $K_\ell$ is invertible by assumption.
\end{proof}

Now we can introduce the ``working set'' (with respect to the data) and related \emph{support vectors} along the lines of \cite[Subsec.~4.1]{wang2021support}. Let $(\fb^*, \db^*, b^*, \ub^*)$ be a P-stationary point of \eqref{opt_MKL_01.1}. Then by Definition~\ref{def_P_stationary}, there exist a Lagrange multiplier $\lambdab^*\in\Rbb^m$ and a scalar $\gamma>0$ such that \eqref{stationary_cond_Lagrangian_u} holds. Define a set
\begin{equation}\label{work_set_T*}
T_*:=\left\{i\in \mathbb{N}_m : u_i^*-\gamma\lambda_i^* \in (0,\sqrt{2\gamma C}\,]\right\},
\end{equation}
and its complement $\overline{T}_*:=\Nbb_m\backslash T_*$. For a vector $\zb\in\Rbb^m$ and an index set $T\subset\Nbb_m$ with cardinality $|T|$, we write $\zb_T$ for the $|T|$-dimensional subvector of $\zb$ whose components are indexed in $T$. Then it follows from \eqref{stationary_cond_Lagrangian_u}, \eqref{prox_op_vec}, and \eqref{prox_op_scalar} that 
\begin{equation}\label{value_u*}
\bmat \ub^*_{T_*} \\ \ub^*_{\overline{T}_*}\emat = \bmat \zerob \\ (\ub^*-\gamma\lambdab^*)_{\overline{T}_*}\emat.
\end{equation}
Consequently, we have $\lambdab^*_{\overline{T}_*}=\zerob$ and the working set \eqref{work_set_T*} can be equivalently written as
\begin{equation}%\label{work_set_T*}
T_*:=\left\{i\in \mathbb{N}_m : \lambda_i^* \in [ -\sqrt{2C/\gamma}, 0)\right\}.
\end{equation}
In plain words, the nonzero components of $\lambdab^*$ are indexed only in $T_*$ with values in the interval $[ -\sqrt{2C/\gamma}, 0)$. This brings significant sparsification of the decision function since by \eqref{stationary_cond_Lagrangian_f} we have
\begin{equation}\label{f_l_via_T}
\frac{1}{d_\ell^\ast}f_\ell^\ast(\cdot) =-\sum_{i\in T_*}\lambda_i^* y_i \kappa_\ell(\,\cdot\,, \xb_i),\quad \ell\in\Nbb_L.
\end{equation}
This familiar formula calls for the following comments:

%\begin{remark}
\begin{itemize}
	\item The vectors $\{\xb_i : i\in T_*\}$ correspond to nonzero Lagrange multipliers $\{\lambda_i^*\}$ just like standard support vectors in \cite{cortes1995support}. They are called \emph{$L_{0/1}$-support vectors} in \cite{wang2021support} since they are selected by the proximal operator \eqref{prox_def}.
	
	\item Moreover, the condition \eqref{primal_constraint_03} implies that 
	\begin{equation}
	y_i \left( f^*(\xb_i) +b^* \right) =1 \quad \text{for}\quad i\in T_* 
	\end{equation}
	%		$y_i \left( f^*(\xb_i) +b^* \right) =1$ for $i\in T_*$ 
	since $\ub^*_{T_*}=\zerob$ by \eqref{value_u*}. This means that any $L_{0/1}$-support vector $\xb_i$ satisfies the equation 
	$f^*(\xb) +b^* =\pm 1$ 
%	\begin{equation}\label{support_hyperplane}
%	f^*(\xb) +b^* =\pm 1
%	\end{equation}
	which gives two hyperplanes in the RKHS $\Hbb$, called \emph{support hyperplanes}. It is well known that such a property is guaranteed for linearly separable datasets, and may not hold for linearly inseparable datasets with penalty functions other than the $(0, 1)$-type. Such an observation explains why the $L_{0/1}$-SVM can (in principle) have fewer support vectors than other SVM models. %, which is confirmed by our numerical examples. \bin{[TO be shown...]}
\end{itemize}

%\end{remark}

Next we give the framework of ADMM for the problem \eqref{opt_MKL_01.1} which now viewed as finite-dimensional. In order to handle the inequality constraints \eqref{d_l_simplex_01}, we employ the indicator function (in the sense of Convex Analysis) 
\begin{equation}
g(\zb) = \left\{\begin{aligned}
0, & \quad \zb\in\Rbb_+^L \\
+\infty, & \quad \zb\notin\Rbb_+^L
\end{aligned}\right.
\end{equation}
of the nonnegative orthant $\Rbb_+^L$ and convert \eqref{opt_MKL_01.1} to the form:
\begin{subequations}\label{ADMM-form}
	\begin{align}
	& \underset{\substack{\fb\in \Fbb,\ \db\in\Rbb^L\\ b\in\Rbb,\ \ub\in\Rbb^m \\ \zb\in\Rbb^L}}{\min}
	& & \frac{1}{2} \sum_{\ell} \frac{1}{d_\ell} \|f_\ell\|^2_{\Hbb_\ell} + C \|\ub_+\|_0 + g(\zb)\\
	& \text{s.t.}
	& & \db = \zb \label{constaint_indica_z}\\
	& & & \eqref{d_l_simplex_02}\ \text{and}\ \eqref{equal_constraint_u}, \nonumber
	\end{align}
\end{subequations}
see \cite[Section~5]{boyd2011distributed}.
Obviously we have $g(\zb) = \sum_{\ell}g_\ell(z_\ell)$ where each $g_\ell$ is the respective indicator function for the nonnegative semiaxis $z_\ell\geq 0$.
The \emph{augmented Lagrangian} of \eqref{ADMM-form} is given by 
\begin{equation}
\begin{aligned}
 & \Lcal_{\rhob}(\fb,\db,b,\ub,\zb;\lambdab,\thetab,\alpha) = \ \frac{1}{2} \sum_{\ell} \frac{1}{d_\ell} \|f_\ell\|^2_{\Hbb_\ell} + C \|\ub_+\|_0 \\
 & \qquad + g(\zb) +\lambdab^\top \rb +\frac{\rho_1}{2}\|\rb\|^2 + \thetab^\top (\db-\zb)+\frac{\rho_2}{2} \|\db-\zb\|^2 \\
 & \qquad +\alpha\left(\oneb^\top\db-1\right)+\frac{\rho_3}{2}\left(\oneb^\top\db-1\right)^2
\end{aligned}
\end{equation}
where $\lambdab=(\lambda_1, \cdots, \lambda_m)$, $\thetab=(\theta_1, \cdots, \theta_L)$, and $\alpha$ are the Lagrangian multipliers, $\rb:=\ub + D_{\yb}\vb_{f} +b\yb-\oneb$ is the residual vector, and $\rhob=(\rho_1, \rho_2, \rho_3)$ contains three positive penalty parameters.
%\bin{[COMMENT: It seems to me that there is NO need to take different $\rho_1, \rho_2, \rho_3$.]}
We have also written $f=\sum_{\ell} f_\ell$ in the %sense of Lemma~\ref{lem_equiv_opt_01} 
the style of \eqref{equal_constraint_u} to simplify the notation.

Given the $k$-th iterate $(\fb^k, \db^k, b^k, \ub^k, \zb^k; \lambdab^k, \thetab^k, \alpha^k)$, the framework to update each variable is as follows:
\begin{equation}\label{ADMM-main}
\begin{array}{l}
\ub^{k+1} =  \underset{\substack{\ub\in\Rbb^L}}{\argmin}\ \Lcal_{\rhob}(\fb^k,\db^k,b^k,\ub,\zb^k; \lambdab^k,\thetab^k,\alpha^k) \\
\fb^{k+1} = \underset{\substack{\fb\in\Fbb}}{\argmin}\ \Lcal_{\rhob}(\fb,\db^k,b^k,\ub^{k+1},\zb^k; \lambdab^k,\thetab^k,\alpha^k)\\ 
b^{k+1} = \underset{\substack{b\in\Rbb}}{\argmin}\ \Lcal_{\rhob}(\fb^{k+1},\db^k,b,\ub^{k+1},\zb^k; \lambdab^k,\thetab^k,\alpha^k)\\
\zb^{k+1} = \underset{\substack{\zb\in\Rbb^L}}{\argmin}\ \Lcal_{\rhob}(\fb^{k+1},\db^k,b^{k+1},\ub^{k+1},\zb; \lambdab^k,\thetab^k,\alpha^k)\\
\db^{k+1} = \underset{\substack{\db\in\Rbb^L}}{\argmin}\ \Lcal_{\rhob}(\fb^{k+1},\db,b^{k+1},\ub^{k+1},\zb^{k+1}; \lambdab^{k},\thetab^k,\alpha^k)\\
\thetab^{k+1} = \thetab^k+\rho_2(\db^{k+1}-\zb^{k+1})\\
\alpha^{k+1} = \alpha^k+\rho_3\left(\oneb^\top \db^{k+1}-1\right) \\
\lambda_i^{k+1} = \lambda_i^k+\rho_1[u_i^{k+1}+y_i(f^{k+1}(\xb_i) +b^{k+1})-1], \ i\in\Nbb_m. \\
\end{array}
\end{equation}
Next, we describe how to solve each subproblem above. %\bin{[DECIDE a proper order of presentation! Also, the order of the multipliers $(\thetab, \alpha, \lambdab)$ in Def.~\ref{def_P_stationary}.]}

\begin{enumerate}
	\item \textbf{Updating} $\ub^{k+1}$. For each component of $\ub$, the $u_i$-subproblem in \eqref{ADMM-main} admits a separation of variables and can be solved along the following lines:
	\begin{equation}\label{update_u_prox}
	\begin{aligned}
	u_i^{k+1} & = \underset{u_i\in\Rbb}{\argmin} \ \ C \|\ub_+\|_0+\sum_{i}\lambda_{i}^k u_i \\
	& \quad +\frac{\rho_1}{2}\sum_{i} [ u_i + y_i (f^k(\xb_i) +b^k) -1]^2\\
	& = \underset{u_i\in\Rbb}{\argmin} \ \ C \|(u_{i})_+\|_0+\frac{\rho_1}{2}(u_i-s_i^k)^2\\
	& = \prox_{\frac{C}{\rho_1}\|(\cdot)_+\|_0}(s_i^k),
	\end{aligned}	
	\end{equation}
	where $s_i^k=1-y_i (f^k(\xb_i) +b^k) -\lambda_i^k/\rho_1$ and the proximal operator given in \eqref{prox_op_scalar}. The corresponding vector can be written compactly as
	\begin{equation}\label{vec_s^k}
	\sbf^k=\textbf{1}-D_{\yb}\vb_{f^k} -b^k\yb -\lambdab^k/\rho_1
	\end{equation}
	with $\vb_{f^k}$ defined similarly to \eqref{value_vec_f_l}.
	%$\vb=\textbf{1}-Dy\sum_{\ell}f^k_\ell(\xb)-Dyb^k-\lambdab^k/\rho_1$
	Define a working set $T_k$ at the $k$-th step  by %\bin{[THE symbol $\Nbb_m$?]}
	\begin{equation}\label{work_set_T_k}
	T_k:=\left\{i\in \mathbb{N}_m:s_i^{k}\in (0,\sqrt{2C/\rho_1}\,]\right\}.
	\end{equation}
	Then \eqref{update_u_prox} can equivalently be written as
	\begin{equation}\label{update_u}
	\ub_{T_k}^{k+1}=\zerob,\quad\ub_{\overline{T}_k}^{k+1}=\sbf^k_{\overline{T}_k}.
	\end{equation}
	%\end{enumerate}

	%\bin{[EMPHASIZE the use of the working set, and also for $\ell$...]}

	%\begin{itemize}
	\item \textbf{Updating} $\fb^{k+1}$.
	The $\fb$-subproblem in \eqref{ADMM-main} is
	\begin{equation}\label{f-solution}
	\begin{aligned}
	\fb^{k+1} = & \underset{\fb\in\Fbb}{\argmin}\ \
	\frac{1}{2} \sum_{\ell} \frac{1}{d^k_\ell} \|f_\ell\|^2_{\Hbb_\ell} + \\
	& +\sum_{i}\lambda_{i}^k \left[ u_i^{k+1} + y_i \left(f(\xb_i) +b^k \right) -1\right]\\
	& +\frac{\rho_1}{2}\sum_{i} \left[ u_i^{k+1} + y_i \left(f(\xb_i) +b^k \right) -1\right]^2.\\
	\end{aligned}
	\end{equation}
	To solve \eqref{f-solution}, we again adopt a componentwise strategy, that is, for each $d_\ell>0$ we update each $f_\ell$ separately according to the stationarity condition.
	By Lemma~\ref{lem_repres_f_l}, we only need to update the function values at all the inputs $\{\xb_i\}$, namely the vector in \eqref{value_vec_f_l}.
	More precisely, notice that by the reproducing property, the Frech\'{e}t derivative of $f_\ell(\xb_i)=\innerprod{f_\ell}{\kappa_\ell(\,\cdot\,, \xb_i)}$ with respect to $f_\ell$ can be identified as $\kappa_\ell(\,\cdot\,, \xb_i)$. Then the stationary-point equation for $f_\ell$ can be written as
	\begin{equation}\label{stationa_eqn_f_l}
	\begin{aligned}
	0= & \frac{1}{d^k_\ell}f_\ell(\cdot) +\sum_{i} \left\{\lambda_{i}^k y_i \kappa_\ell(\,\cdot\,, \xb_i) \right.\\
	& \left.+\rho_1y_i \kappa_\ell(\,\cdot\,, \xb_i)\left[ u_i^{k+1} + y_i (f(\xb_i) +b^k) -1\right]\right\}
	\end{aligned}
	\end{equation}
	which holds for any $\xb$, and in particular for all $\xb_i$. Notice that here $f=f_\ell+\sum_{t\neq\ell}f_t^k$. In vector form, we have the equation
	\begin{equation}
	\begin{aligned}
	& \left(\frac{1}{d_\ell^k}I+\rho_1K_\ell\right) \vb_{f_\ell} = \\ %+K_\ell D_y \\  = & 
	& - \rho_1 K_\ell D_{\yb}\left(\ub^{k+1}+D_{\yb}\sum_{t\neq\ell} \vb_{f_t^k} +b^k\yb-\oneb + \lambdab^k/\rho_1\right).
	\end{aligned}
	\end{equation}
	which can be readily solved since the coefficient matrix before $\vb_{f_\ell}$ is positive definite. On the other hand, for each $d_\ell = 0$ we keep $f_\ell \equiv 0$.
	%\begin{equation}
	%	\begin{aligned}
	%	\left[
	%		\begin{aligned}
	%			f_\ell(x_1)\\
	%			f_\ell(x_2)\\
	%			\cdots\\
	%			f_\ell(x_m)
	%		\end{aligned}
	%		\right]&=(\frac{1}{d_\ell^k}I+\rho_1K_\ell)^{-1}K_\ell D_y\left[\lambdab^k+\rho_1\left(\ub^{k+1}+D_y\sum_{t\neq\ell}^{L}\left[
	%		\begin{aligned}
	%			f_t(x_1)\\
	%			f_t(x_2)\\
	%			\cdots\\
	%			f_t(x_m)
	%		\end{aligned}
	%		\right]+b^ky-1\right)\right]
	%	\end{aligned}
	%\end{equation}
	%\end{itemize}

	%\begin{itemize}
	\item \textbf{Updating}  $b^{k+1}$. The $b$-subproblem can be written as
	\begin{equation}
	\begin{aligned}
	b^{k+1}= &\ \underset{\substack{b\in\Rbb}}{\argmin}\ \sum_{i}\lambda_{i}^k [u_i^{k+1} + y_i(f^{k+1}(\xb_i) +b) -1]\\
	 & \quad+\frac{\rho_1}{2}\sum_{i}[ u_i^{k+1} + y_i(f^{k+1}(\xb_i) +b) -1]^2.
	\end{aligned}
	\end{equation}
	The stationary-point equation is
	\begin{equation}\label{stationa_eqn_b}
%	\begin{aligned}
	0=\sum_{i}\lambda^k_iy_i+\rho_1\sum_{i}y_i [u_i^{k+1} + y_i(f^{k+1}(\xb_i) +b) -1], 
%	\end{aligned}
	\end{equation}
	which is solved by
	\begin{equation}\label{update_b}
	%		\begin{aligned}
	b^{k+1}
	%			&=-\frac{\sum_{i}\lambda^{k}_{i}y_i}{\rho_1m}-\frac{\sum_{i}y_iu_i^{k+1}}{m}\\
	%			&\quad -\sum_{i}f^{k+1}(\xb_i)+\frac{\sum_{i}y_i}{m} \\
	= \frac{1}{m} \left[\yb^\top (\oneb-\ub^{k+1}-\lambdab^k/\rho_1)-\oneb^\top\vb_{f^{k+1}}\right]. %\sum_{i}f^{k+1}(\xb_i)
	%		\end{aligned}
	\end{equation}
	
	%\end{itemize}

	%\begin{itemize}
	\item \textbf{Updating}  $\zb^{k+1}$. The subproblem for each component of $\zb^{k+1}$ in \eqref{ADMM-main} also admits a separation of variables and we carry out the update as follows:
	\begin{equation}
	\begin{aligned}
	z_\ell^{k+1}&=\underset{z_\ell\in\Rbb}{\argmin}\ \ g(\zb) -\thetab^k\zb +\frac{\rho_2}{2} \|\db^k-\zb\|^2 \\
	%		& -\sum_{\ell}\theta^k_\ell z_\ell+\frac{\rho_2}{2}\sum_{\ell}(d^k_\ell-z_\ell)^2 \\
	&=\underset{\substack{z_\ell\in\Rbb}}{\argmin}\ \ g_\ell(z_\ell) -\theta^k_\ell z_\ell +\frac{\rho_2}{2} (d^k_\ell-z_\ell)^2\\
	%		&=\underset{\substack{z\in\Rbb^L}}{\argmin}\ \ g(z)+ (L\rho_2/2)\|z-\overline d^{k+1}-(1/\rho_2)\overline{\theta}^k\|_2^2\\
	&= \left(d_\ell^{k}+{\theta_{\ell}^k}/{\rho_2}\right)_+. 
	%   = \left\{\begin{array}{ll}
	%   d_\ell^{k}+\frac{\theta_{\ell}^k}{\rho_2}, & \quad \text{if}\ d_\ell^{k}+\frac{\theta_{\ell}^k}{\rho_2}>0 \\
	%   0, & \quad d_\ell^{k}+\frac{\theta_{\ell}^k}{\rho_2}\leq 0.
	%   \end{array}\right.
	\end{aligned}
	\end{equation}
	where the function $(\,\cdot\,)_+$ takes the positive part of the argument. Inspired by this expression, we can define another working set 
	\begin{equation}\label{work_set_S_k}
	S_k := \{\ell\in\Nbb_L : d_\ell^{k}+{\theta_{\ell}^k}/{\rho_2}>0 \}
	\end{equation}
	%	$S_k := \{\ell\in\Nbb_L : d_\ell^{k}+{\theta_{\ell}^k}/{\rho_2}>0 \}$ 
	for the selection of kernels and $\overline{S}_k=\Nbb_L\backslash S_k$. Then an equivalent update formula for $\zb$ is
	\begin{equation}\label{update_z}
	\zb^{k+1}_{S_k} = (\db^k+\thetab^k/\rho_2)_{S_k}, \quad \zb^{k+1}_{\overline{S}_k} = \zerob.
	\end{equation}
	%	\begin{equation}
	%		\bmat \zb^{k+1}_{S_k} \\ \zb^{k+1}_{\overline{S}_k}\emat = \bmat (\db^k+\thetab^k/\rho_2)_{S_k} \\ \zerob\emat.
	%	\end{equation}
	Notice that this working set is less complicated than $T_k$ in \eqref{work_set_T_k} since the function $(\,\cdot\,)_+$ is continuous, unlike the proximal mapping.
	
	%	\bin{[START HERE!]}
	
	%where $g$ is the indicator function of $\Rbb_+^n$. %$\overline{z_\ell}=(1/L)\sum_{\ell}d^k_\ell$
	%\begin{equation}
	%	z^{k+1}=(\overline d^{k+1}-(1/\rho_2)\overline{\theta}^k)_+
	%\end{equation}
	%\end{itemize}

	%\begin{itemize}
	\item \textbf{Updating}  $\db^{k+1}$. Again we adopt a componentwise strategy for the update of $d^{k+1}_\ell$ in \eqref{ADMM-main}:
	
	\begin{equation}\label{subprob_d_l}
	\begin{aligned}
	& d^{k+1}_\ell =\underset{d_\ell\in\Rbb}{\argmin}\ \ \frac{1}{2d_\ell} \|f^{k+1}_\ell\|^2_{\Hbb_\ell} + \theta^k_\ell (d_\ell-z_\ell^{k+1})\\ 
	& + \frac{\rho_2}{2} (d_\ell-z_\ell^{k+1})^2 +\alpha^k d_\ell+\frac{\rho_3}{2} \left(d_\ell+ \sum_{t\neq\ell}d_\ell^k-1\right)^2  \\
	%		&+\sum_{\ell}\theta^k_\ell( d_\ell-z^{k+1}_\ell)+\frac{\rho_2}{2}(d_\ell-z^{k+1}_\ell)^2\\
	%		&+\alpha(\sum_{\ell}d_\ell-1)+\frac{\rho_3}{2}(\sum_{\ell}d_\ell-1)^2 
	\end{aligned}
	\end{equation}
	where $\|f_\ell\|^2_{\Hbb_\ell} = \vb_{f_\ell}^\top K^{-1}_\ell \vb_{f_\ell}$ (see Lemma~\ref{lem_repres_f_l}),
	%with $\vb_{f_\ell}$ given in \eqref{value_vec_f_l}, 
	and $d_t^k$ are held fixed for $t\neq \ell$.
	The stationary-point equation for $d_\ell$ is just
	\begin{equation}\label{cub_eqn_d_l}
	\begin{aligned}
	0=&-\frac{1}{2d^2_\ell} \vb_{f^{k+1}_\ell}^\top K^{-1}_\ell \vb_{f_\ell^{k+1}}+\theta_{\ell}^k+\rho_2(d_\ell-z^{k+1}_\ell)\\
	&+\alpha^k+\rho_3\left(d_\ell+ \sum_{t\neq\ell}d_\ell^k-1\right).
	\end{aligned}
	\end{equation}
	This is a cubic polynomial equation (after multiplying both sides by $d_\ell^2$) which can be solved numerically. Moreover, since the coefficients are real, there must be at least one real root. When $\|f^{k+1}_\ell\|^2_{\Hbb_\ell}>0$, the objective function in \eqref{subprob_d_l} is strictly convex in the positive semiaxis $d_\ell>0$ and one can show without difficulty that a local minimizer, which must be a stationary point, exists. Therefore, we conclude that \eqref{cub_eqn_d_l} must have a \emph{positive real root}.

	However, if instead we search the minimum of \eqref{subprob_d_l} in all of $\Rbb$, a pathology happens when $\|f^{k+1}_\ell\|^2_{\Hbb_\ell}>0$ and $d_\ell$ tends to zero from the left. In that case, the objective function tends to $-\infty$ and $+\infty$ on two sides of zero.
	As an ad-hoc recipe, we take the positive real root\footnote{If there are multiple positive real roots (possibly due to some numerical issue), we take the one with the largest absolute value.} of \eqref{cub_eqn_d_l} as $d_\ell^{k+1}$ for $\ell\in S_k$. For $\ell\in \overline{S}_k$,  on the other hand, we let $\db^{k+1}_{\overline{S}_k}=\zerob$
%	\begin{equation}\label{update_d_Sk_bar}
%	\db^{k+1}_{\overline{S}_k}=\zerob
%	\end{equation}
	in accordance with the second formula in \eqref{update_z}, because in the end (if the algorithm converges) we will have the equality \eqref{constaint_indica_z}.
	
%	for reasons explained later. 
	
	% which is also the minimizer of \eqref{subprob_d_l} in the positive semiaxis $d_\ell>0$.
	%\end{itemize}
	
%	\begin{remark}
%		\bin{
%		When $\|f^{k+1}_\ell\|^2_{\Hbb_\ell}$ is very small (but still positive), the first term in the objective function of \eqref{subprob_d_l} essentially vanishes, and the problem reduces (approximately) to a simpler quadratic minimization. In this case, however, it could happen that there is no positve real root among the solutions of the cubic polynomial equation \eqref{cub_eqn_d_l}, which means that our ad-hoc recipe could fail. This point manifested in our numerical simulations. Our resolution is to set the corresponding $d_\ell^{k+1}$ to zero, the same as \eqref{update_d_Sk_bar}. The reason is obvious: $\|f^{k+1}_\ell\|^2_{\Hbb_\ell}\approx0$ means that the component $f_\ell$ almost makes no contribution to the sum $f=\sum_{\ell}f_\ell$, so the $\ell$-th kernel can be removed from consideration.
%		}
%	\end{remark}
	
	%\begin{itemize}
	\item \textbf{Updating}  $\thetab^{k+1}$. With the help of the working set \eqref{work_set_S_k}, the update of $\thetab$ in \eqref{ADMM-main} can be simplified as:	
	\begin{equation}\label{update_theta}
	%		\begin{aligned}
	%			\theta_{\ell}^{k+1}=\theta_{\ell}^k+\rho_2(d_\ell^{k+1}-z_\ell^{k+1}) \\
	\thetab^{k+1}_{S_k}  =  \thetab^k_{S_k} + \rho_2(\db^{k+1}-\zb^{k+1})_{S_k},\quad \thetab^{k+1}_{\overline{S}_k}  =  \thetab^k_{\overline{S}_k}.
	%		\end{aligned}
	\end{equation}
	
	%\end{itemize}

	%\begin{itemize}
	\item \textbf{Updating}  $\alpha^{k+1}$. See \eqref{ADMM-main}. %we update $\alpha^{k+1}$ in \eqref{ADMM-main} as follows
	
	%	\begin{equation}
	%		\begin{aligned}
	%			\alpha^{k+1}=\alpha^k+\rho_3(\sum_{\ell}d_\ell^{k+1}-1)
	%		\end{aligned}
	%	\end{equation}
	%\end{itemize}

	%\begin{itemize}
	\item \textbf{Updating}  $\lambdab^{k+1}$. Inspired by the property of the working set $T_*$ (see \eqref{value_u*} and the next line after it), the update of $\lambdab$ in \eqref{ADMM-main} is simplified as follows:
	
	\begin{equation}\label{update_lambda}
	\begin{aligned}
	\lambdab^{k+1}_{T_k} = \lambdab^{k}_{T_k}+\rho_1\rb^{k+1}_{T_k},\quad \lambdab^{k+1}_{\overline{T}_k}=\zerob %[ u^{k+1}_i + y_i (f^{k+1}(\xb_i) + b^{k+1})-1] 
	\end{aligned}
	\end{equation}
	where the vector $\rb^{k+1}=\ub^{k+1} + D_{\yb}\vb_{f^{k+1}} +b^{k+1}\yb-\oneb$. In other words, we remove the components of $\lambdab$ which are not in the current working set.
\end{enumerate}

The update steps above are collected into the next algorithm.

\begin{algorithm}
	\caption{ADMM for the MKL-$L_{0/1}$-SVM}
	\label{alg:admm_MKL_SVM}
	\begin{algorithmic}[1]
		\State Set $C$, $\rho_1$, $\rho_2$, $\rho_3$, $\{\kappa_\ell\}$, \texttt{max\_iter}, and %the counter
		 $k=0$.
		\State Initialize $(\fb^0, \db^0, b^0, \ub^0, \zb^0; \thetab^0,\alpha^0,\lambdab^0)$.
		\While{The terminating condition is not met and $k\le \texttt{max\_iter}$}
		\State Update                                                                                                                                                                                                                                           $T_k$ as in \eqref{work_set_T_k}. 
		\State Update $\ub^{k+1}$ by \eqref{update_u}.
		\State Update $\fb^{k+1}$ by \eqref{stationa_eqn_f_l}.
	     \State Update $b^{k+1}$ by \eqref{update_b}.
		\State Update $\zb^{k+1}$ by \eqref{update_z}.
		\State Update $\db^{k+1}$ by \eqref{cub_eqn_d_l}.
		\State Update $\thetab^{k+1}$ by \eqref{update_theta}.
		\State Update $\alpha^{k+1}$ in \eqref{ADMM-main}.
	  \State Update $\lambdab^{k+1}$ by \eqref{update_lambda}.
	  \State Set $k=k+1$.
 		\EndWhile\\
 		\Return the final iterate $(\fb^k,\db^k,b^k,\ub^k,\zb^k; \thetab^k,\alpha^k,\lambdab^k)$.
	\end{algorithmic}\label{ADMM-Solver}
\end{algorithm}
%
%\begin{algorithm}
%	\begin{algorithmic}[1]
%		\State 	Initialize($\ub^0,\fb^0,b^0,\zb^0,\db^0,\thetab^0,\alpha^0,\lambdab^0$);
%	 \State Set $C,\rho_1,\rho_2,\rho_3,\sigma_1,\sigma_2,K>0$ and $k=0.$
%	 
% %\While{The halting condition does not hold and $k\le K$ \textbf{do}
% 
%		\State Guess where I am
%	
%		\caption{ADMM for the MKL-$L_{0/1}$-SVM \bin{[TO change...]}}\label{alg:ADMM_(0,1)-SVM}
%	\end{algorithmic}
%\end{algorithm}

%\blu{[CHANGE!]}

Unfortunately, we are not able to prove the convergence of Algorithm~\ref{alg:admm_MKL_SVM} as it seems very hard in general due to the nonconvexity and nonsmoothness of the optimization problem \eqref{opt_MKL_01.1}. However, we can give a characterization of the limit point \emph{if} the algorithm converges, see the next result.

\begin{theorem}\label{thm_converg_implies_loc_opt}
	Suppose that the sequence 
	\begin{equation*}
	\{\Psi^k\} = \{(\fb^k, \db^k, b^k, \ub^k, \zb^k; \thetab^k, \alpha^k, \lambdab^k)\}
	\end{equation*}
	%	$\{(\fb^k, \db^k, b^k, \ub^k, \zb^k; \lambdab^k, \thetab^k, \alpha^k)\}$ 
	generated by the ADMM algorithm above has a limit point $\Psi^*=(\fb^*, \db^*, b^*, \ub^*, \zb^*; \thetab^*, \alpha^*, \lambdab^*)$. Then $(\fb^*, \db^*, b^*, \ub^*)$ is a P-stationary point with $\gamma=1/\rho_1$ and also a local minimizer of the problem \eqref{opt_MKL_01.1}.
\end{theorem}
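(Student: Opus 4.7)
The strategy is to pass to the limit in every update formula of \eqref{ADMM-main} along a subsequence $\Psi^{k_j}\to\Psi^*$ and verify, one by one, the KKT-like conditions \eqref{P-stationary_cond} in Definition~\ref{def_P_stationary}, with the parameter identification $\gamma=1/\rho_1$. Local minimality of the resulting P-stationary point then follows immediately from Theorem~\ref{thm_optimality}(2).

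I would begin on the dual side. The $\alpha$ update in \eqref{ADMM-main} gives $\oneb^\top\db^*=1$, which is \eqref{primal_constraint_02}. The $\thetab$ update \eqref{update_theta} forces $\db^*=\zb^*$, and combining this with the closed-form $\zb$ update \eqref{update_z} one reads off $d_\ell^*\geq 0$ (equation \eqref{primal_constraint_01}) together with the complementary slackness $\theta_\ell^* d_\ell^*=0$ and the sign condition $\theta_\ell^*\geq 0$ after identifying the KKT multiplier with the negative of the ADMM multiplier of $\db=\zb$, whose natural sign at a constrained minimum is nonpositive. Next, the $\lambdab$ update \eqref{update_lambda} forces $\rho_1 r_i^*=0$ on the eventual working set, while off the working set the closed-form \eqref{update_u} yields $u_i^*=s_i^*=1-y_i(f^*(\xb_i)+b^*)$; either way $r_i^*=0$, establishing \eqref{primal_constraint_03}.

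I would then read off the remaining stationarity conditions from the primal subproblem first-order equations. Using $\rb^*=\zerob$ just derived, equation \eqref{stationa_eqn_f_l} collapses to \eqref{stationary_cond_Lagrangian_f}, and \eqref{stationa_eqn_b} collapses to $\yb^\top\lambdab^*=0$, i.e.\ \eqref{stationary_cond_Lagrangian_b}. Passing to the limit in \eqref{cub_eqn_d_l}, the $\rho_2$ and $\rho_3$ terms vanish thanks to $\db^*=\zb^*$ and $\oneb^\top\db^*=1$, and the substitution $\|f_\ell^*\|^2_{\Hbb_\ell}=\vb_{f_\ell^*}^\top K_\ell^{-1}\vb_{f_\ell^*}$ provided by Lemma~\ref{lem_repres_f_l} turns the limiting equation into \eqref{stationary_cond_Lagrangian_d}. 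Finally, passing to the limit in \eqref{update_u} and noting that $\sbf^k\to\sbf^*=\ub^*-(1/\rho_1)\lambdab^*$ (again by $\rb^*=\zerob$) would yield the proximal identity \eqref{stationary_cond_Lagrangian_u} with $\gamma=1/\rho_1$.

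The main obstacle is precisely this last step, because the scalar $L_{0/1}$ proximal operator \eqref{prox_op_scalar} has a jump discontinuity at $z=\sqrt{2\gamma C}$; consequently $\ub^{k+1}=\prox(\sbf^k)$ and $\sbf^k\to\sbf^*$ do not automatically imply $\ub^*=\prox(\sbf^*)$. My plan is to invoke the generic condition $s_i^*\neq\sqrt{2C/\rho_1}$ for every $i\in\Nbb_m$, under which the working set $T_k$ in \eqref{work_set_T_k} stabilizes to $T_*$ in \eqref{work_set_T*} for all $k$ sufficiently large. Once the working sets are frozen, each component of \eqref{update_u} is continuous at the limit, and the desired identity $\prox_{(C/\rho_1)\|(\cdot)_+\|_0}(\ub^*-(1/\rho_1)\lambdab^*)=\ub^*$ follows. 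This completes the verification of \eqref{P-stationary_cond} and, by Theorem~\ref{thm_optimality}(2), the local-minimality conclusion.
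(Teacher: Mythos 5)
Your proposal is correct and follows exactly the route the paper indicates in its two-sentence proof sketch: pass to the limit in each ADMM subproblem, verify the conditions \eqref{P-stationary_cond} with $\gamma=1/\rho_1$, and invoke Theorem~\ref{thm_optimality}(2) for local minimality. The paper omits all details, so your careful handling of the sign identification of $\thetab^*$ and, especially, of the jump discontinuity of the proximal operator at $z=\sqrt{2\gamma C}$ (which genuinely requires a working-set stabilization or non-degeneracy argument that the authors do not mention) goes beyond what is written, but it is the same approach.
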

%\blu{
\begin{proof}[Sketch of the proof]
The idea is to %introduce the limit working set and to take limit for each subproblem, more precisely, for the respective stationarity condition. 
%and substitute it into the P-stationary point condition. 
use the convergence properties of the subproblems. It can be shown that the limit point obtained by the ADMM algorithm is a P-stationary point whose local optimality is guaranteed by Theorem~\ref{thm_optimality}.
\end{proof}
%}
\begin{remark}\label{rem:sparsity_kernel_combi}
	Similar to the working set $T_*$ in \eqref{work_set_T*} and the associated support vectors, the working set $S_k$ in \eqref{work_set_S_k} and its limit set $S_*$ renders \emph{sparsity} in the combination of the kernels $\{\kappa_\ell\}$ for the MKL task, because the constraint \eqref{d_l_simplex_02} can be interpreted as $\|\db\|_1=1$, an equality involving the $\ell_1$-norm, due to the nonnegativity condition \eqref{d_l_simplex_01}. Such an effect of sparsification can also be observed from our numerical example in the next section.
\end{remark}

%\section{Large-scale computation via random Fourier feature \bin{[Needed or not?]}}\label{sec:large-scale}
%o
%	%I suggest pursuing this topic further in the current paper. However, 
%	I have the feeling that this part may not be needed by the ADMM in Section~\ref{sec:algorithm} because I did not find any \emph{explicit inversion} of the matrix $A$ in \cite[Section~4]{wang2021support}. Anyway, if the performance of the ADMM is satisfactory, we can discard this section.]}
%
%Inversion of the kernel matrix of size $m\times m$ is involved in solving the kernel SVM problem where $m$ is the number of training points. For large $m$, such an operation can be prohibitive. Following \cite{rahimi2007random}, for \emph{shift-invariant} kernels $\kappa(\xb,\yb)=\kappa(\xb-\yb)$, a suitable sampling scheme can be devised to construct a finite-dimensional feature mapping from $\Rbb^n$ to $\Rbb^d$:
%\begin{equation}
%%\begin{aligned}
%\xb \mapsto \zb(\xb),
%%\end{aligned}
%\end{equation}
%so that the inner product is approximately preserved
%\begin{equation}
%\kappa(\xb,\yb) = \innerprod{\phi(\xb)}{\phi(\yb)}_{\Hbb} \approx \zb(\xb)^\top \zb(\yb),\quad \forall\xb,\ \yb\in\Rbb^n.
%\end{equation}
%The key ingredient is \emph{Bochner's theorem} from Harmonic Analysis. After such an approximation, the computational burden can be significantly lessened as for example, in the kernel ridge regression, one needs to invert instead a matrix of size $d\times d$ where typically $d\ll m$.

\section{Simulation on synthetic data}\label{sec:sims}

%\bin{[Do simulations on synthetic data and real-world data. Refer to e.g., \cite{rahimi2007random,wang2021support}.]}

%Take care of the choice of the regularization parameter $C$ in the optimization problem. The most straightforward way should be \emph{cross-validation}, see \cite{theodoridis2020machine}.

%\blu{
In this section, we conduct numerical experiments using Matlab on a Dell laptop workstation with 64GB of memory and an Intel Core i7 2.5GHz CPU on synthetic data to demonstrate the sparsity and effectiveness of the proposed MKL-$L_{0/1}$-SVM. 
The simulation presented here is very simple and by no means extensive. More precisely, we work with two-dimensional (planar) data ($n=2$) for the purpose of easy visualization using a ten-kernel ($L=10$) \LzeroneSVM. The ten kernel functions are all Gaussian, see \eqref{Gaussian_kernel}, with quite arbitrarily chosen hyperparameters $\{\sigma_\ell\}$ which are listed in Table~\ref{table:hyperparameters}. Some specific points are discussed next.

\begin{table}[h]
	\begin{center}
		\caption{An arbitrary choice of the hyperparameters}
		\label{table:hyperparameters}
		\begin{tabular}{ccccc}
			\toprule
			$\sigma_1$&$\sigma_2$&$\sigma_3$&$\sigma_4$&$\sigma_5$ \\
			\midrule
			$0.1400$&$0.0995$&$0.0161$&$0.0409$&$0.1561$ \\
			\midrule
			$\sigma_6$&$\sigma_7$&$\sigma_8$&$\sigma_9$&$\sigma_{10}$ \\
			\midrule
			$0.0156$&$0.1221$&$0.1175$&$0.0539$&$0.1247$ \\
			\bottomrule
		\end{tabular}
		
	\end{center}
	
\end{table}

\textbf{(a) Stopping criteria.} In the implementation, we terminate Algorithm~\ref{alg:admm_MKL_SVM} if the iterate $(\fb^k$, $\db^k$  $b^k$, $\ub^k$, $\zb^k$; $\thetab^k$, $\alpha^k$, $\lambdab^k)$ satisfies the condition:
 \begin{equation}\label{stopping_criteria}
\max \left\{\beta_1^k,\beta_2^k,\beta_3^k,\beta_4^k,\beta_5^k,\beta_6^k,\beta_7^k,\beta_8^k \right\}< \tol,
 \end{equation}
    where the number $\tol>0$ is the tolerance level and 
    \begin{equation}%\label{stopping criteria}
    	\begin{array}{ll}
    		\beta_1^k:=\|\ub^k-\ub^{k-1}\|,\quad
    		& \beta_2^k:=\|\fb^k-\fb^{k-1}\|, \\
    			\beta_3^k:=|b^k-b^{k-1}|,\quad
    		& \beta_4^k:=\|\zb^k-\zb^{k-1}\|, \\
    			\beta_5^k:=\|\db^k-\db^{k-1}\|,\quad
    		& \beta_6^k:=\|\thetab^k-\thetab^{k-1}\|, \\
    		\beta_7^k:=|\alpha^k-\alpha^{k-1}|,\quad
    		& \beta_8^k:=\|\lambdab^k-\lambdab^{k-1}\|.
    	\end{array}    	
    \end{equation}
The condition says, in plain words, that two successive iterates are sufficiently close. 

  \textbf{(b) Parameters setting.} 
  In Algorithm~\ref{alg:admm_MKL_SVM}, the parameters $C$ and $\rho_1$ characterize the working set \eqref{work_set_T_k} which is related to the number of support vectors, see \eqref{f_l_via_T} and the comments right after it. For simplicity, we have taken $\rho=\rho_1=\rho_2=\rho_3$ in the ADMM algorithm. 
%  In addition, the Gaussian kernels are determined by their hyperparameters $\{\sigma_\ell\}$. 
%  The choice of these parameters is crucial for the performance of our \LzeroneSVM.
  In order to choose the two parameters, the standard $10$-fold Cross Validation (CV) is employed on the training data, where $C$ and $\rho$ are selected from $\{2^{-2},2^{-1},\cdots,2^{7}\}$ and $\{a^{-2},a^{-1},\cdots,a^{7}\}$ with $a=\sqrt{2}$, respectively.  The parameter combination with the highest CV accuracy is picked out. In addition, we set the maximum number of iterations $\texttt{max\_iter}=10^3$ in Algorithm~\ref{alg:admm_MKL_SVM} and the tolerance level $\tol=10^{-3}$ in \eqref{stopping_criteria}.
  
  For the starting point, we set $\ub^0=\lambdab^0=\zerob$, $\thetab^0=\zerob$, $\fb^0=\zerob$, $\zb^0=\zerob$, $\alpha^0=0$ and $\db^0=\frac{1}{L} \oneb$, $b^0=1$ or $-1$. 
  The reason for such a choice is explained in the following. Let us call the objective function in \eqref{opt_MKL_01} $J(\fb,\db,b)$. The we immediately notice that $J(\zerob, \frac{1}{L} \oneb, 1) = Cm_-$ and $J(\zerob, \frac{1}{L} \oneb, -1) = Cm_+$ where $m_+$ and $m_-$ denote the numbers of positive and negative components in the label vector $\yb$.
  Therefore, we should choose $(\fb^0, \db^0, b^0)$ such that $J(\fb^0, \db^0, b^0)\leq C\min\{m_+, m_-\}$.

\textbf{(c) Evaluation criteria.} To evaluate the classification performance of our \LzeroneSVM, we report two criteria: the testing accuracy (\texttt{TACC}) and the number of support vectors (\texttt{NSV}) which is equal to $|T_*|$. Let $\{(\xb^{\test}_j,y^{\test}_j):j=1,\cdots,m_{\test}\}$ be the testing data. The testing accuracy is defined as
\begin{equation*}
	\texttt{TACC}:=1-\frac{1}{2m_{\test}}\sum_{j=1}^{m_{\test}} \left|\sign\left(\sum_{\ell} f^*_\ell(\xb^{\test}_j)+b^*\right)-y^{\test}_j \right|.
\end{equation*}
Here the quantity $\sum_{\ell} f^*_\ell(\xb^{\test}_j)$ can be computed using \eqref{stationary_cond_Lagrangian_f}. More specifically, for each $\ell\in\Nbb_L$ we can evaluate $f_\ell^\ast(\cdot) = - d_\ell^* \sum_{i}\lambda_i^* y_i \kappa_\ell(\,\cdot\,, \xb_i)$ on the test data using the convergent iterate produced by Algorithm~\ref{alg:admm_MKL_SVM}.

\textbf{(d) Simulation result.} % For visualization, we consider a two-dimensional example. 
The planar data are generated randomly in the four quadrants and then (randomly) split into a training set and a testing set of equal size, i.e., $m=m_{\test}=100$. 
%Randomly generate $4n_1$ data points, where the data points are randomly distributed in the four quadrants. 
More specifically\footnote{For details about data generation, the reader can refer to the section ``Train SVM Classifier Using Custom Kernel'' in the online documentation \url{https://www.mathworks.com/help/stats/support-vector-machines-for-binary-classification.html}.}, points in the first and third quadrants are given label $1$, while points in the second and fourth quadrants are given label $-1$. 
%Specifically, we first define the number of data points in each quadrant as $n$, then use the rand function to generate random radii $r1$ and $r2$, and random angles $t1$ and $t2$. Combine $r1$ and $t1$, and $r2$ and $t2$, respectively, to convert them into Cartesian coordinates and generate two matrices $X1$ and $X2$, each containing $2n_1$ points. Finally, merge $X1$ and $X2$ into a $4n_1$ by $2$ matrix $X$, and create a vector $Y$ containing $4n_1$ elements, with the first $2n_1$ elements set to $1$ and the last $2n_1$ elements set to $-1$. Randomly split the generated data into training and testing sets, with both sets having a size of $2n_1$, i.e., $n_1=50$, $m=2n_1$ and $m_t=2n_1$
The parameters $C$, $\rho$ are determined by the CV procedure described before on the training set.  
After that, the MKL-\LzeroneSVM\ is optimized on the training set via Algorithm~\ref{alg:admm_MKL_SVM}, and then the accuracy of the optimal classifier is verified using the testing set. The results are shown in Fig.~\ref{fig:result_classif} and Table~\ref{table:parameters} which also gives the best parameters selected during the CV. Notice that we have only reported the four $d_\ell$'s which are significant in size (larger than $10^{-4}$). The rest $d_\ell$'s are set to zero.
Additionally, we found that setting $d_1^*$ and $d_6^*$ (which are relatively small among the four) to $0$ does not affect \texttt{TACC}.
Therefore, our optimization procedure has claimed the following: the optimal linear combination of the ten candidate kernels in our \LzeroneSVM\ involves essentially only \emph{two} kernels, namely the ones with hyperparameters $\sigma_3$ and $\sigma_5$. In other words, we have obtained great sparsity in the kernel combination in accordance with Remark~\ref{rem:sparsity_kernel_combi}.
We comment at last that the testing accuracy of $90\%$ is obviously not good enough and an improvement is possible via a better selection of the kernels and the hyperparameters.

%In Table \ref{table:parameters}, we report the parameters used in the simulation along with the variable $\db^*$ and some key indices for the results
%were $L=2$, $C=1$, $\rho_1=\rho_2=\rho_3=2$, $\sigma_1=0.03$, and $\sigma_2=0.07$. The results are recorded 

%Moreover, The required number of ADMM iterations for training is $\texttt{iter}=341$ before convergence, and the computational time on the testing set is $0.58$ second. 

%the number of iterations (\texttt{iter}) and the CPU time (time)

\begin{table}[h]
	\begin{center}
		\caption{Simulation results}
		\label{table:parameters}
		\begin{tabular}{cccccccc}
			\toprule
			$C$&$\rho$&$d_1^*$&$d_3^*$&$d_5^*$&$d_6^*$&\texttt{TACC}&\texttt{NSV}\\
			\midrule
			$16$&$4$&0.0021&$0.4575$&$0.5290$&$0.0113$&$0.90$&$100$ \\
			%			\midrule
			%			$\sigma_1$&$\sigma_2$&$\sigma_3$&$\sigma_4$&$\sigma_5$&$\sigma_6$\\
			%			\midrule
			%			$0.1400$&$0.0995$&$0.0161$&$0.0409$&$0.1561$&$0.0156$\\
			%			\midrule
			%			$\sigma_7$&$\sigma_8$&$\sigma_9$&$\sigma_{10}$&\texttt{TACC}&\texttt{NSV}\\
			%			\midrule
			%			$0.1221$&$0.1175$&$0.0539$&$0.1247$&$0.90$&$100$ \\
			\bottomrule
		\end{tabular}
		%	\begin{tabular}{cccccccccc}
		%		\toprule
		%		$\sigma_1$&$\sigma_2$&$\sigma_3$&$\sigma_4$&$\sigma_5$&$\sigma_6$&$ $&$ $\\
		%		\midrule
		%	$0.1400$&$0.0995$&$0.0161$&$0.0409$&$0.1561$&$0.0156$&$ $&$ $\\
		%		%\bottomrule
		%	\end{tabular}
		%    \begin{tabular}{cccccccccc}
		%    	\toprule
		%    	$\sigma_7$&$\sigma_8$&$\sigma_9$&$\sigma_{10}$&$ $&$ $&$ $&$ $\\
		%    	\midrule
		%    	$0.1221$&$0.1175$&$0.0539$&$0.1247$&$ ---$&$--- $&$ $&$ $\\
		%    	\bottomrule
		%    \end{tabular}
		
	\end{center}
	
\end{table}

\begin{figure}[H]
	\centering
	\includegraphics[width= 6cm]{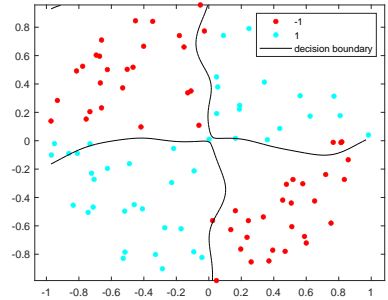}
	\caption{Scatter diagram for the result of classification on the \emph{testing} set with the decision boundary which corresponds to all $\xb$ (on a regular grid) that satisfy the equation $f^*(\xb) +b^* =0$.}
%	\emph{Right}. Result of classification on the \emph{testing} set similar to the left panel.
	\label{fig:result_classif}
\end{figure}

%Table~\ref{table:parameters} 
%\blu{For the $\db^*$ obtained by solving Algorithm~\ref{ADMM-Solver}, if $d_\ell^*<1e-4$, then $d_\ell^*$ is set to $0$. The elements in $\db^*$ that cannot be set to $0$ are listed in Table Table~\ref{table:parameters}. Additionally, we found that setting $d_1^*$ and $d_6^*$, which are close to $0$, to $0$ does not affect the accuracy of the \emph{testing} set.
%}

%%%%%%%%%%%%%%%%%%%%%%%%%%%%%%%%%%%%%%%%%%%%%%%%%%%%%%%%%%%%%%%%%%%%%%%%%%%%%%%%
\section{Conclusions}\label{sec:conclu}

%\bin{[CHECK!]}
We have considered a MKL task for the $L_{0/1}$-SVM in order to select the best possible combination of some given kernel functions while minimizing a regularized $(0, 1)$-loss function. Despite the nonconvex and nonsmooth nature of the objective function, we have provided a set of KKT-like first-order optimality conditions to characterize global and local minimizers.
%and establish the relationship between the optimal solution and P-stationary points.
Numerically, we have developed an efficient ADMM solver to obtain a locally optimal solution to the MKL-\LzeroneSVM\ problem.
Preliminary simulation results have shown the effectiveness of our theory and algorithm.

%making it numerically tractable. This work could lead to the development of efficient numerical solvers in future research.
%%%%%%%%%%%%%%%%%%%%%%%%%%%%%%%%%%%%%%%%%%%%%%%%%%%%%%%%%%%%%%%%%%%%%%%%%%%%%%%
\section*{Acknowledgments}

The authors would like to thank Mr. Jiahao Liu for his assistance in the Matlab implementation of Algorithm~\ref{ADMM-Solver}.
%}

%%%%%%%%%%%%%%%%%%%%%%%%%%%%%%%%%%%%%%%%%%%%%%%%%%%%%%%%%%%%%%%%%%%%%%%%%%%%%%%%

\bibliographystyle{IEEEtran}
\bibliography{references}   

\end{document}